\documentclass[10pt,twocolumn,letterpaper]{article}
\usepackage[pagenumbers]{cvpr}

\usepackage{graphicx}
\usepackage{amsmath}
\usepackage{amssymb}
\usepackage{booktabs}

%
\usepackage[pagebackref,breaklinks,colorlinks]{hyperref}

\usepackage[capitalize]{cleveref}
\crefname{section}{Sec.}{Secs.}
\Crefname{section}{Section}{Sections}
\Crefname{table}{Table}{Tables}
\crefname{table}{Tab.}{Tabs.}

\usepackage[accsupp]{axessibility} 

\usepackage{amsfonts,bm}
\usepackage{physics}
\usepackage{wrapfig}
\usepackage{multirow}
\usepackage{caption}
\usepackage{color,soul}
\usepackage{xcolor}
\usepackage[frozencache,cachedir=minted-cache]{minted}

\usepackage{mathtools}
\usepackage{amsthm}

\theoremstyle{plain}
\newtheorem{theorem}{Theorem}

\newtheorem{lemma}{Lemma}

\newtheorem{assumption}{Assumption}
\theoremstyle{definition}
\newtheorem{definition}{Definition}

\theoremstyle{remark}

\begin{document}

\title{VNE: An Effective Method for Improving Deep Representation \\ by Manipulating Eigenvalue Distribution}

\author{
Jaeill Kim\textsuperscript{\rm 1},
Suhyun Kang\textsuperscript{\rm 1},
Duhun Hwang\textsuperscript{\rm 1},
Jungwook Shin\textsuperscript{\rm 1},
Wonjong Rhee\textsuperscript{\rm 1,2,3}\thanks{Corresponding author}\\
\textsuperscript{\rm 1} Department of Intelligence and Information,
Seoul National University\\
\textsuperscript{\rm 2} Interdisciplinary Program in Artificial Intelligence (IPAI),
Seoul National University\\
\textsuperscript{\rm 3} Research Institute for Convergence Science,
Seoul National University
\\
{\tt\small \{jaeill0704, su\_hyun, yelobean, jungwook.shin, wrhee\}@snu.ac.kr}
}

\maketitle

\begin{abstract}
Since the introduction of deep learning, a wide scope of representation properties, such as decorrelation, whitening, disentanglement, rank, isotropy, and mutual information, have been studied to improve the quality of representation. However, manipulating such properties can be challenging in terms of implementational effectiveness and general applicability. To address these limitations, we propose to regularize von Neumann entropy~(VNE) of representation. First, we demonstrate that the mathematical formulation of VNE is superior in effectively manipulating the eigenvalues of the representation autocorrelation matrix. Then, we demonstrate that it is widely applicable in improving state-of-the-art algorithms or popular benchmark algorithms by investigating domain-generalization, meta-learning, self-supervised learning, and generative models. In addition, we formally establish theoretical connections with rank, disentanglement, and isotropy of representation. Finally, we provide discussions on the dimension control of VNE and the relationship with Shannon entropy. Code is available at: \url{https://github.com/jaeill/CVPR23-VNE}.

\end{abstract}

\section{Introduction}
\label{sec:introduction}

\begin{figure}[t!]
  \centering
  \subfloat[Domain generalization]{
    \includegraphics[width=0.43\columnwidth]{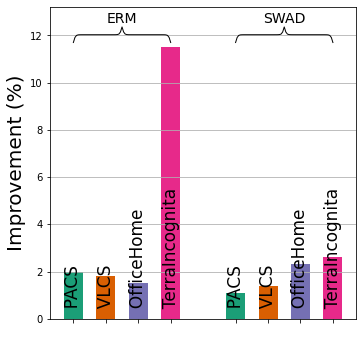}
    \vspace{-0.25cm}
  }
  \subfloat[Meta-learning]{
    \includegraphics[width=0.43\columnwidth]{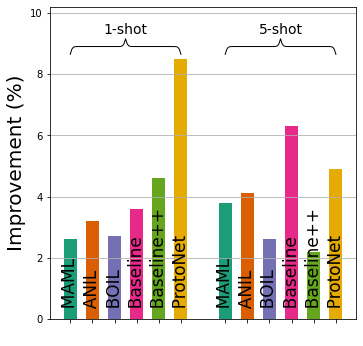}
    \vspace{-0.25cm}
  }
  \textbf{}
  \subfloat[Self-supervised learning]{
    \includegraphics[width=0.43\columnwidth]{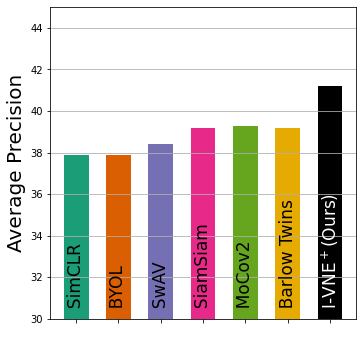}
    \vspace{-0.25cm}
  }
  \subfloat[GAN]{
    \includegraphics[width=0.43\columnwidth]{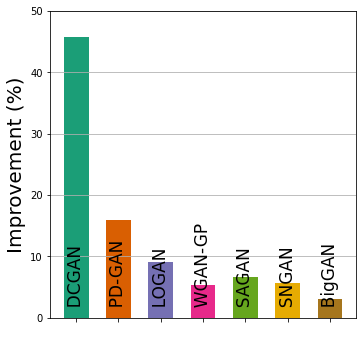}
    \vspace{-0.25cm}
  }
\vspace{-0.2cm}
\caption{\label{figure1}
General applicability of VNE: performance of state-of-the-art algorithms or popular benchmark algorithms can be further improved by regularizing von Neumann entropy (full result tables will be provided in Section~\ref{sec:experiments}). 
(a) Domain generalization: relative improvements over ERM and SWAD~(current state-of-the-art). 
(b) Meta-learning: relative improvements over six popular benchmark algorithms. 
(c) Self-supervised learning: performance comparison against the current state-of-the-art algorithms for COCO detection.
(d) GAN: relative improvements in Fréchet Inception Distance~(FID) for seven popular benchmark algorithms.
}
\vspace{-0.4cm}
\end{figure}

Improving the quality of deep representation by pursuing a variety of properties in the representation has been adopted as a conventional practice.
To learn representations with useful properties,
various methods have been proposed to manipulate the representations.
For example, \textit{decorrelation} reduces overfitting, enhances generalization in supervised learning~\cite{cogswell2015reducing,xiong2016regularizing}, and helps in clustering~\cite{tao2021clustering}.
\textit{Whitening} improves convergence and generalization in supervised learning~\cite{huang2018decorrelated,huang2019iterative,desjardins2015natural,luo2017learning}, improves GAN stability~\cite{siarohin2018whitening}, and helps in domain adaptation~\cite{roy2019unsupervised}.
\textit{Disentanglement} was proposed as a desirable property of representations~\cite{bengio2009learning,achille2018emergence,hjelm2018learning}.
Increasing \textit{rank} of representations was proposed to resolve the dimensional collapse phenomenon in self-supervised learning~\cite{hua2021feature,jing2021understanding}.
\textit{Isotropy} was proposed to improve the downstream task performance of BERT-based models in NLP tasks~\cite{li2020sentence,su2021whitening}.
\textit{Preventing informational collapse} (also known as representation collapse) was proposed as a successful learning objective in non-contrastive learning~\cite{bardes2021vicreg, zbontar2021barlow}.
In addition, \textit{maximizing mutual information} was proposed as a successful learning objective in contrastive learning~\cite{henaff2020data, oord2018representation, tian2020contrastive}.

Although aforementioned properties are considered as desirable for useful representations,
typical implementational limitations, such as dependency to specific architectures or difficulty in proper loss formulation, inhibited the properties from being more popularly adopted.
For example, the methods for whitening~\cite{huang2018decorrelated,huang2019iterative,desjardins2015natural,luo2017learning,siarohin2018whitening,roy2019unsupervised}, isotropy~\cite{li2020sentence,su2021whitening}, and rank~\cite{hua2021feature} are typically dependent on specific architectures (e.g., decorrelated batch normalization~\cite{huang2018decorrelated} and normalizing flow~\cite{li2020sentence}).
Regarding disentanglement and mutual information, loss formulations are not straightforward because measuring disentanglement generally relies on external models~\cite{higgins2017beta,eastwood2018framework,kim2018disentangling,chen2018isolating,glorot2011domain} or is tuned for a specific dataset~\cite{karras2019style} and 
formulating mutual information in high-dimensional spaces is notoriously difficult~\cite{tschannen2019mutual} and only tractable lower bound can be implemented by training additional critic functions~\cite{poole2019variational}.
Meanwhile, several decorrelation methods~\cite{cogswell2015reducing,xiong2016regularizing,bardes2021vicreg, zbontar2021barlow} have implemented model-agnostic and straightforward loss formulations that minimize the Frobenius norm between the autocorrelation matrix $\mathcal{C}_{\text{auto}}$ (or crosscorrelation matrix $\mathcal{C}_{\text{cross}}$) and a scale identity matrix $c \cdot I$ for an appropriate $c>0$.
Because of the easiness of enforcing decorrelation via a simple loss formulation, these decorrelation methods can be considered to be generally applicable to a wide scope of applications. However, the current implementation of the loss as a Frobenius norm can exhibit undesirable behaviors during learning and thus fail to have a positive influence as we will explain further in Section~\ref{sec:vne_vs_Frobenius}.

To address the implementational limitations,
this study considers the eigenvalue distribution of the autocorrelation matrix $\mathcal{C}_{\text{auto}}$.
Because $\mathcal{C}_{\text{auto}}$ converges to scalar identity matrix $c \cdot I$ for an appropriate $c>0$ if and only if the eigenvalue distribution of $\mathcal{C}_{\text{auto}}$ converges to a uniform distribution,
it is possible to control the eigenvalue distribution using methods that are different from Frobenius norm.
To this end, we adopt a mathematical formulation from quantum information theory and introduce \textit{von Neumann entropy}~(VNE) of deep representation, a novel method that directly controls the eigenvalue distribution of $\mathcal{C}_{\text{auto}}$ via an entropy function.
Because entropy function is an effective measure for the uniformity of underlying distribution and can handle extreme values, optimizing the entropy function is quite stable and does not possess implementational limitations of previous methods.

In addition to the effectiveness of VNE on manipulating the eigenvalue distribution of $\mathcal{C}_{\text{auto}}$,
we demonstrate that regularizing VNE is widely beneficial in improving the existing studies. As summarized in Figure~\ref{figure1}, performance improvement is significant and consistent.
Moreover, theoretical connections between VNE and the popular representation properties are formally proven and support the empirical superiority.
Thanks to the implementational effectiveness and theoretical connections, VNE regularizer can effectively control not only von Neumann entropy but also other theoretically related properties, including rank and isotropy.
Our contributions can be summarized as below:
\begin{itemize}
\vspace{-0.1cm}
    \item We introduce a novel representation regularization method, von Neumann entropy of deep representation.
    \item We describe VNE's \textit{implementational effectiveness} (in Section~\ref{sec:vonneumannentropy}).
    \item We demonstrate \textit{general applicability} of VNE by improving current state-of-the-art methods in various tasks and achieving a new state-of-the-art performance in self-supervised learning and domain generalization (in Section~\ref{sec:experiments}).
    \item We provide \textit{theoretical connections} by proving that VNE is theoretically connected to rank, disentanglement, and isotropy of representation (in Section~\ref{sec:propertiesofvne}).
\vspace{-0.1cm}
\end{itemize}

\section{Implementational Effectiveness of VNE}
\label{sec:vonneumannentropy}

Even though the von Neumann entropy originates from quantum information theory, we focus on its mathematical formulation to understand why it is effective for manipulating representation properties. We start by defining the autocorrelation matrix.

\subsection{Autocorrelation of Representation}
For a given mini-batch of $N$ samples, the representation matrix can be denoted as $\bm{H} = [\bm{h}_1, \bm{h}_2, ..., \bm{h}_N]^T \in \mathbb{R}^{N \times d}$, where $d$ is the size of the representation vector. For simplicity, we assume $L_2$-normalized representation vectors satisfying $||\bm{h}_i||_2=1$ as in~\cite{wang2020understanding,parkhi2015deep,schroff2015facenet,liu2017sphereface,wang2017normface,mettes2019hyperspherical,xu2018spherical}. Then, the autocorrelation matrix of the representation is defined as:
\vspace{-0.3cm}
\begin{equation}
\label{def:density_rho}
\mathcal{C}_{\text{auto}} \triangleq \sum_{i=1}^N\frac{1}{N}\bm{h}_i\bm{h}_i^T=\bm{H}^T\bm{H}/N.
\vspace{-0.3cm}
\end{equation}
For $\mathcal{C}_{\text{auto}}$'s eigenvalues $\{\lambda_{j}\}$, it can be easily verified that $\sum_j \lambda_{j}=1$ and $\forall_j~\lambda_{j} \ge 0$ because $||\bm{h}_i||_2=1$ and $\mathcal{C}_{\text{auto}} \ge 0$. For the readers familiar with quantum information theory, $\mathcal{C}_{\text{auto}}$ is used in place of the density matrix $\rho$ of Supplementary~\ref{sec:quantum_prelim}~(a brief introduction to quantum theory).

$\mathcal{C}_{\text{auto}}$ is closely related to a variety of representation properties. In the extreme case of $\mathcal{C}_{\text{auto}} \rightarrow c \cdot I_d$, where $c$ is an adequate positive constant, the eigenvalue distribution of $\mathcal{C}_{\text{auto}}$ becomes perfectly uniform. Then, the representation $\bm{h}$ becomes decorrelated~\cite{cogswell2015reducing}, whitened~\cite{huang2018decorrelated}, full rank~\cite{hua2021feature}, and isotropic~\cite{vershynin2018high}. In the case of self-supervised learning, it means prevention of informational collapse\cite{bardes2021vicreg, zbontar2021barlow}. 

Besides its relevance to numerous representation properties, regularizing $\mathcal{C}_{\text{auto}}$ is of a great interest because it permits a simple implementation. Unlike many of the existing implementations that can be dependent on specific architecture or dataset, difficult to implement as a  loss, or dependent on successful learning of external models, $\mathcal{C}_{\text{auto}}$ can be regularized as a simple penalty loss. 
Because $\mathcal{C}_{\text{auto}}$ is closely related to a variety of representation properties and because it permits a broad applicability, we focus on $\mathcal{C}_{\text{auto}}$ in this study.

\subsection{Regularization with Frobenius Norm}
A popular method for regularizing the eigenvalues of $\mathcal{C}_{\text{auto}}$ is to implement the loss of Frobenius norm as shown below.
\vspace{-0.3cm}
\begin{equation}
\begin{split}
\mathcal{L}_{\text{Frobenius}}&\triangleq|| \mathcal{C}_{\text{auto}} - c \cdot I_d ||_F^2\\
&= \sum_{i} (\mathcal{C}_{i,i}-c)^2  + \sum_{i} \sum_{j \ne i} \mathcal{C}_{i,j}^2 \label{eq:Frobenius}
\end{split}
\vspace{-0.5cm}
\end{equation}
$\mathcal{C}_{i,j}$ is the $(i,j)$ element of $\mathcal{C}_{\text{auto}}$ and $c$ is an adequate positive constant. While this approach has been widely adopted in the previous studies including DeCov~\cite{cogswell2015reducing}, cw-CR~\cite{choi2019utilizing}, SDC~\cite{xiong2016regularizing}, Barlow Twins~\cite{zbontar2021barlow}, and VICReg~\cite{bardes2021vicreg}, it can be ineffective for controlling eigenvalues as we will show in Section~\ref{sec:vne_vs_Frobenius}.

\subsection{Regularization with Von Neumann Entropy}
\label{sec:vne_definition}
Von Neumann entropy of autocorrelation is defined as the Shannon entropy over the eigenvalues of $\mathcal{C}_{\text{auto}}$. The mathematical formulation is shown below. 
\vspace{-0.1cm}
\begin{equation}
\label{eq:vne_formula}
S(\mathcal{C}_{\text{auto}}) \triangleq -\sum_j \lambda_{j} \log{\lambda_{j}}.
\vspace{-0.2cm}
\end{equation}
As shown in Lemma~\ref{lem:entropy} of Supplementary~\ref{sec:prop_proof}, $S(\mathcal{C}_{\text{auto}})$ ranges between zero and $\log{d}$.
Implementing of VNE regularization is simple. When training an arbitrary task $\mathcal{T}$, we can subtract $\alpha \cdot S(\mathcal{C}_{\text{auto}})$ from the main loss $\mathcal{L}_{\mathcal{T}}$.
\vspace{-0.1cm}
\begin{equation}
\mathcal{L}_{\mathcal{T}\text{+VNE}}=\mathcal{L}_{\mathcal{T}} - \alpha \cdot S(\mathcal{C}_{\text{auto}}).
\vspace{-0.1cm}
\end{equation}
Note that training $\mathcal{T}$ with $\mathcal{L}_{\mathcal{T}\text{+VNE}}$ is denoted as \textit{VNE$^+$} if $\alpha>0$, \textit{VNE$^-$} if $\alpha<0$, and \textit{Vanilla} if $\alpha=0$.
The PyTorch implementation of $S(\mathcal{C}_{\text{auto}})$ can be found in Figure~\ref{figure:vne_code} of Supplementary~\ref{sec:main_algo}.
Computational overhead of VNE calculation is light, as demonstrated in Table~\ref{tab:computational_overhead} of Supplementary~\ref{sec:computational_overhead}.

\subsection{Frobenius Norm vs. Von Neumann Entropy}
\label{sec:vne_vs_Frobenius}

The formulation of von Neumann entropy in Eq.~(\ref{eq:vne_formula}) exhibits two distinct differences when compared to the formulation of Frobenius norm in Eq.~(\ref{eq:Frobenius}). First, while Frobenius norm deals with all the elements of $\mathcal{C}_{\text{auto}} \in \mathbb{R}^{d \times d}$, VNE relies on an eigenvalue decomposition to identify the eigenvalues of the current model under training and focuses on the current $d$ eigenvalues only. Second, while Frobenius norm can manifest an undesired behavior when some of the eigenvalues are zero and cannot be regulated toward $c$, VNE gracefully handles such dimensions because $0\cdot \log{0} = 0$.

To demonstrate our points, we have performed a supervised learning with ResNet-18 and three datasets. The results are shown in Table~\ref{tab:variance_collapse} where regularization with Frobenius norm causes many neurons to become dead. Instead of focusing on the eigenvalues, Frobenius norm takes a shortcut of making many of the $d=512$ dimensions unusable and fails to recover. Note that \textit{VNE$^+$} and  \textit{VNE$^-$} do not present such a degenerate behavior. 
We have repeated the supervised experiment with ResNet-18, but this time using a relatively sophisticated dataset of ImageNet-1K. The distribution of eigenvalues are shown in Figure~\ref{figure:vne_fro}(a) where Frobenius norm fails to affect the distribution. \textit{VNE$^+$} and \textit{VNE$^-$}, however, successfully make the distribution more uniform and less uniform, respectively. Finally, the learning history of Frobenius norm loss for a self-supervised learning is shown in Figure~\ref{figure:vne_fro}(b). While Barlow Twins~\cite{zbontar2021barlow} is a well-known method, the Frobenius norm loss can be better manipulated by regularizing \textit{VNE$^+$} instead of regularizing the Frobenius norm itself.

\begin{table}[t!]
\centering
\resizebox{0.6\columnwidth}{!}{
\begin{tabular}{@{}lccc@{}}
\toprule
Method                           & \multicolumn{3}{c}{Dead units} \\ \cmidrule(l){2-4} 
                                 & CIFAR-10  & STL-10 & CIFAR-100 \\ \midrule
Vanilla                          & 0         & 0      & 0         \\
VNE$^-$                          & 0         & 0      & 1         \\
VNE$^+$                          & 2         & 0      & 1         \\
$\mathcal{L}_{\text{Frobenius}}$ & 447       & 365    & 325       \\ \bottomrule
\end{tabular}
}
\vspace{-0.2cm}
\caption{\label{tab:variance_collapse}
Count of dead units (dead neurons) when training ResNet-18 with the standard cross-entropy loss. The penultimate layer's representation with $d=512$ was analyzed. 
}
\end{table}

\begin{figure}[t!]
\vspace{-0.25cm}
  \centering
  \subfloat[\label{figure:vne_fro_a}Eigenvalue distribution (ordered)]{
    \includegraphics[width=0.47\columnwidth]{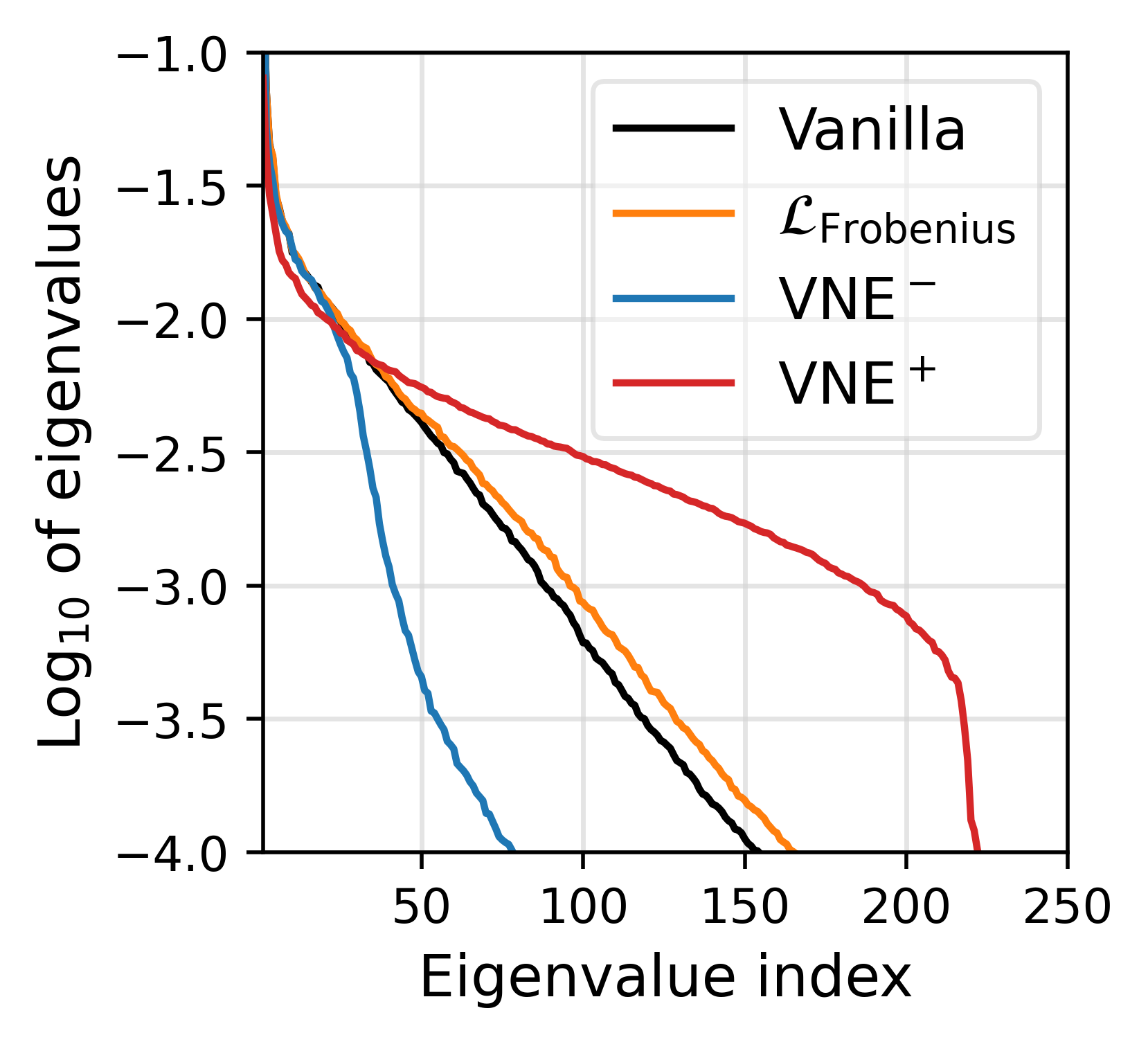}
  }
  \subfloat[\label{figure:vne_fro_b}$|| \mathcal{C}_{\text{auto}} - c \cdot I_d ||_F^2$]{
    \includegraphics[width=0.45\columnwidth]{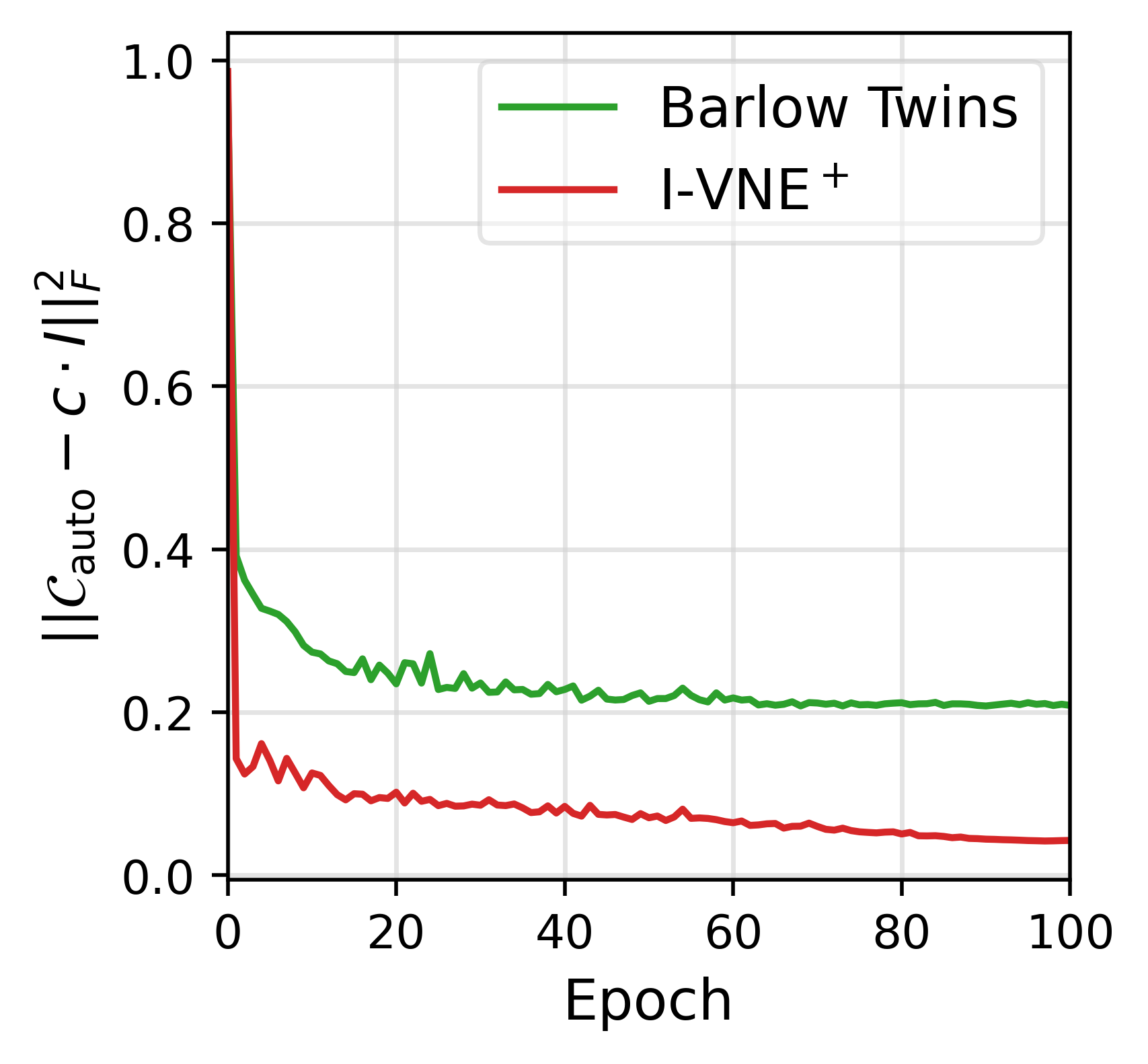}
  }
\vspace{-0.2cm}
\caption{\label{figure:vne_fro} (a) Eigenvalue distribution when training ResNet-18 with the standard cross-entropy loss (dataset: ImageNet-1K, $d=512$). (b) Frobenius norm when performing self-supervised learning with ResNet-18. \textit{I-VNE$^+$} will be explained further in Section~\ref{sec:ssl}.  
}
\vspace{-0.5cm}
\end{figure}

\vspace{-0.2cm}

\section{General Applicability of VNE: Experiments}
\label{sec:experiments}
\vspace{-0.1cm}

In this section, we demonstrate the general applicability of VNE by investigating some of the existing representation learning tasks.
Although the results for meta-learning, self-supervised learning (SSL), and GAN can be supported by the theoretical connections between VNE and the popular representation properties presented in Section~\ref{sec:propertiesofvne},
result for domain generalization (DG) is quite surprising.
We will discuss the fundamental difference of DG in Section~\ref{sec:discussion_dg}.

\vspace{-0.1cm}
\subsection{Domain Generalization: Enhancing Generalization}
\label{sec:domain_generalization}
\vspace{-0.1cm}

Given multi-domain datasets, domain generalization attempts to train models that predict well on unseen data distributions~\cite{arjovsky2019invariant}.
In this section, we demonstrate the effectiveness of VNE on ERM~\cite{gulrajani2020search}, one of the most competitive algorithms in DomainBed~\cite{gulrajani2020search}, and on SWAD~\cite{cha2021swad}, which is the state-of-the-art algorithm.
To reproduce the algorithms, we train ERM and SWAD based on an open source in \cite{gulrajani2020search, cha2021swad}. VNE is calculated for the penultimate representation of ResNet-50 models.
Our experiments are performed in leave-one-domain-out setting~\cite{gulrajani2020search} with the most popular datasets (PACS\cite{li2017deeper}, VLCS\cite{fang2013unbiased}, OfficeHome\cite{venkateswara2017deep}, and TerraIncognita\cite{beery2018recognition}).

We have analyzed the eigenvalue distribution of $\mathcal{C}_{\text{auto}}$, and the results are presented in Figure~\ref{figure:analysis_singular_dg}.
At first glance, VNE$^+$ and VNE$^-$ successfully make the eigenvalue distribution more uniform and less uniform, respectively in Figure~\ref{figure:analysis_singular_dg}(a). The corresponding von Neumann entropies are certainly increased by VNE$^+$ and decreased by VNE$^-$ in Figure~\ref{figure:analysis_singular_dg}(b).
When we take a deeper look at the eigenvalues of Vanilla (we count the number of eigenvalues larger than 1e-4), we observe that DG naturally utilizes a small number of eigenvalues (3\% of total).
In this context, we can hypothesize that DG prefers utilizing a relatively small number of dimensions. In addition, the empirical results support the hypothesis.
In Table~\ref{tab:dg_erm}, VNE$^-$ improves all the benchmarks trained with ERM algorithm in four popular datasets.
In Table~\ref{tab:dg_swad}, VNE$^-$ also improves all the benchmarks trained with SWAD algorithm. Furthermore, the resulting performance is the state-of-the-art because SWAD is the current state-of-the-art algorithm.

\begin{figure}[t!]
  \centering
  \subfloat[Eigenvalue distribution (ordered)]{
    \includegraphics[width=0.47\columnwidth]{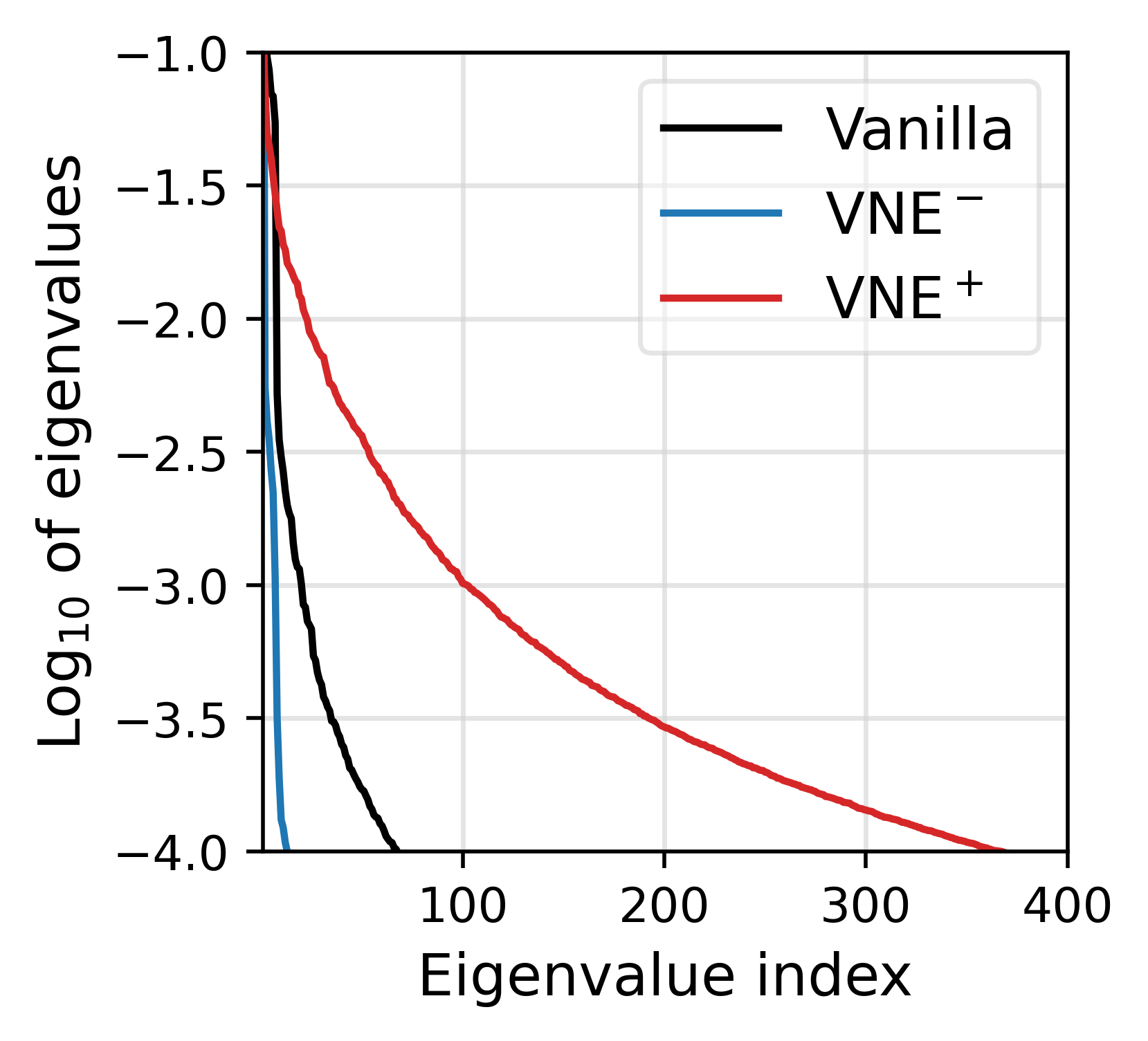}
  }
  \subfloat[von Neumann entropy]{
    \includegraphics[width=0.43\columnwidth]{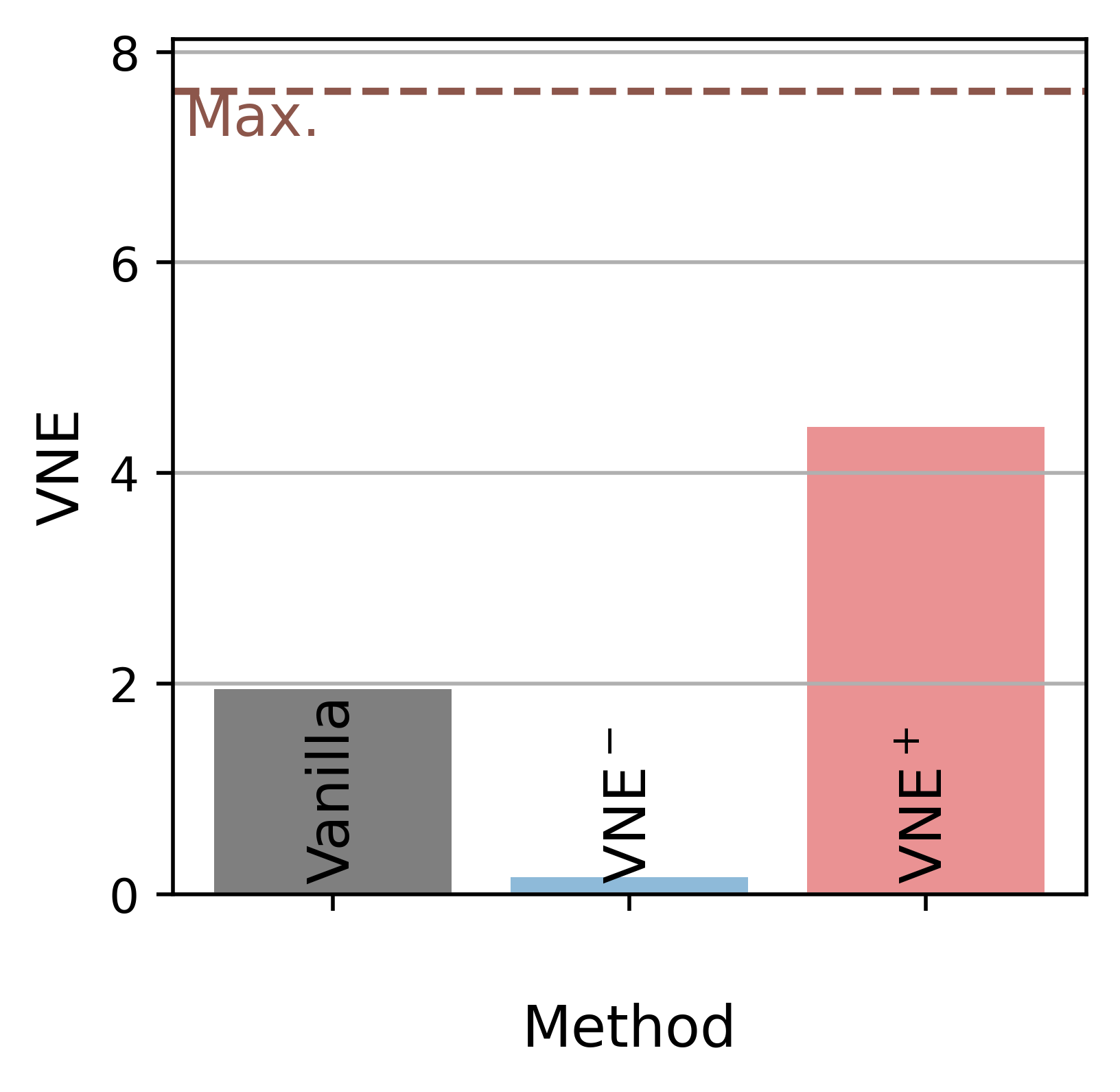}
  }
\vspace{-0.2cm}
\caption{\label{figure:analysis_singular_dg}
Domain Generalization:
In (a), Eigenvalues of $\mathcal{C}_{\text{auto}}$ are presented in log$_{10}$ scale and descending order.
In (b), von Neumann entropies are presented.
ResNet-50 encoders are trained by the ERM~\cite{gulrajani2020search} algorithm with the PACS dataset for 5000 steps. Each representation has a dimension of 2048.
}
\vspace{-0.4cm}
\end{figure}

\begin{table}[ht!]
\centering
\resizebox{0.85\columnwidth}{!}{
\begin{tabular}{@{}llcccccc@{}}
\toprule
Dataset    & \multicolumn{1}{l}{Method} & \multicolumn{4}{c}{Accuracy per test domain}                                                           & \multicolumn{1}{l}{Avg.} & \multicolumn{1}{r}{Diff.}        \\
\midrule
PACS       & \multicolumn{1}{l}{}       & \multicolumn{1}{l}{A}    & \multicolumn{1}{l}{C}   & \multicolumn{1}{l}{P}   & \multicolumn{1}{l}{S}   & \multicolumn{1}{l}{}    &                                  \\
\cmidrule(l){2-2} \cmidrule(l){3-6} \cmidrule(l){7-8}
           & Vanilla                        & 87.6                     & 79.7                    & 95.9                    & 77.6                    & 85.2                    &                                  \\
           & VNE$^+$                   & 82.4                     & 79.2                    & 96.6                    & 70.9                    & 82.3                    & -2.9                             \\
           & VNE$^-$                   & 88.6                     & 79.9                    & 96.7                    & 82.3                    & \textbf{86.9}           & \multicolumn{1}{r}{\textbf{1.7}} \\
\midrule
VLSC       & \multicolumn{1}{l}{}       & \multicolumn{1}{l}{C}    & \multicolumn{1}{l}{L}   & \multicolumn{1}{l}{S}   & \multicolumn{1}{l}{V}   & \multicolumn{1}{l}{}    &                                  \\
\cmidrule(l){2-2} \cmidrule(l){3-6} \cmidrule(l){7-8}
           & Vanilla                        & 98.9                     & 61.5                    & 70.3                    & 76.1                    & 76.7                    &                                  \\
           & VNE$^+$                   & 96.6                     & 65.5                    & 70.1                    & 75.2                    & 76.8                    & 0.1                              \\
           & VNE$^-$                   & 97.5                     & 65.9                    & 70.4                    & 78.4                    & \textbf{78.1}           & \multicolumn{1}{r}{\textbf{1.4}} \\
\midrule
OfficeHome & \multicolumn{1}{l}{}       & \multicolumn{1}{l}{A}    & \multicolumn{1}{l}{C}   & \multicolumn{1}{l}{P}   & \multicolumn{1}{l}{R}   & \multicolumn{1}{l}{}    &                                  \\
\cmidrule(l){2-2} \cmidrule(l){3-6} \cmidrule(l){7-8}
           & Vanilla                        & 57.9                     & 52.5                    & 75.5                    & 73.5                    & 64.9                    &                                  \\
           & VNE$^+$                   & 59.6                     & 50.7                    & 73.1                    & 74.4                    & 64.4                    & -0.5                             \\
           & VNE$^-$                   & 60.4                     & 54.7                    & 73.7                    & 74.7                    & \textbf{65.9}           & \multicolumn{1}{r}{\textbf{1.0}} \\
\midrule
TerraIncognita      & \multicolumn{1}{l}{}       & \multicolumn{1}{l}{L100} & \multicolumn{1}{l}{L38} & \multicolumn{1}{l}{L43} & \multicolumn{1}{l}{L46} & \multicolumn{1}{l}{}    &                                  \\
\cmidrule(l){2-2} \cmidrule(l){3-6} \cmidrule(l){7-8}
           & Vanilla                        & 50.4                     & 42.0                    & 56.8                    & 32.3                    & 45.4                    &                                  \\
           & VNE$^+$                   & 50.3                     & 38.1                    & 55.4                    & 33.6                    & 44.3                    & -1.1                             \\
           & VNE$^-$                   & 58.1                     & 42.9                    & 58.1                    & 43.5                    & \textbf{50.6}           & \multicolumn{1}{r}{\textbf{5.2}} \\ \bottomrule
\end{tabular}
}
\vspace{-0.21cm}
\caption{\label{tab:dg_erm}Domain Generalization: Performance evaluation of models trained with ERM algorithm and various datasets.}
\vspace{-0.17cm}
\end{table}

\begin{table}[ht!]
\centering
\resizebox{0.85\columnwidth}{!}{
\begin{tabular}{@{}llcccccc@{}}
\toprule
Dataset    & \multicolumn{1}{l}{Method} & \multicolumn{4}{c}{Accuracy per test domain}                                                           & \multicolumn{1}{l}{Avg.} & \multicolumn{1}{r}{Diff.}        \\
\midrule
PACS       & \multicolumn{1}{l}{}       & \multicolumn{1}{l}{A}    & \multicolumn{1}{l}{C}   & \multicolumn{1}{l}{P}   & \multicolumn{1}{l}{S}   & \multicolumn{1}{l}{}    &                                  \\
\cmidrule(l){2-2} \cmidrule(l){3-6} \cmidrule(l){7-8}
           & Vanilla                       & 89.2                     & 83.3                    & 97.9                    & 82.5                    & 88.2                    &                                  \\
           & VNE$^+$                   & 87.9                     & 80.6                    & 97.3                    & 78.8                    & 86.2                    & -2.1                             \\
           & VNE$^-$                   & 90.1                     & 83.8                    & 97.5                    & 81.8                    & \textbf{88.3}           & \multicolumn{1}{r}{\textbf{0.1}} \\
\midrule
VLCS       & \multicolumn{1}{l}{}       & \multicolumn{1}{l}{C}    & \multicolumn{1}{l}{L}   & \multicolumn{1}{l}{S}   & \multicolumn{1}{l}{V}   & \multicolumn{1}{l}{}    &                                  \\
\cmidrule(l){2-2} \cmidrule(l){3-6} \cmidrule(l){7-8}
           & Vanilla                       & 98.9                     & 64.5                    & 74.6                    & 79.7                    & 79.4                    &                                  \\
           & VNE$^+$                   & 98.7                     & 62.9                    & 74.9                    & 80.5                    & 79.2                    & -0.2                             \\
           & VNE$^-$                   & 99.2                     & 63.7                    & 74.4                    & 81.6                    & \textbf{79.7}           & \multicolumn{1}{r}{\textbf{0.3}} \\
\midrule
OfficeHome & \multicolumn{1}{l}{}       & \multicolumn{1}{l}{A}    & \multicolumn{1}{l}{C}   & \multicolumn{1}{l}{P}   & \multicolumn{1}{l}{R}   & \multicolumn{1}{l}{}    &                                  \\
\cmidrule(l){2-2} \cmidrule(l){3-6} \cmidrule(l){7-8}
           & Vanilla                       & 64.6                     & 57.7                    & 78.4                    & 80.1                    & 70.2                    &                                  \\
           & VNE$^+$                   & 65.3                     & 57.6                    & 78.6                    & 80.5                    & 70.5                    & 0.3                              \\
           & VNE$^-$                   & 66.6                     & 58.6                    & 78.9                    & 80.5                    & \textbf{71.1}           & \multicolumn{1}{r}{\textbf{0.9}} \\
\midrule
TerraIncognita      & \multicolumn{1}{l}{}       & \multicolumn{1}{l}{L100} & \multicolumn{1}{l}{L38} & \multicolumn{1}{l}{L43} & \multicolumn{1}{l}{L46} & \multicolumn{1}{l}{}    &                                  \\
\cmidrule(l){2-2} \cmidrule(l){3-6} \cmidrule(l){7-8}
           & Vanilla                       & 58.2                     & 45.1                    & 60.9                    & 39.4                    & 50.9                    &                                  \\
           & VNE$^+$                   & 45.3                     & 37.7                    & 60.7                    & 40.5                    & 46.1                    & -4.8                             \\
           & VNE$^-$                   & 59.9                     & 45.5                    & 59.6                    & 41.9                    & \textbf{51.7}           & \multicolumn{1}{r}{\textbf{0.8}} \\ \bottomrule
\end{tabular}
}
\vspace{-0.21cm}
\caption{\label{tab:dg_swad}Domain Generalization: Performance evaluation of models trained with SWAD algorithm and with various datasets. State-of-the-art performances are indicated in bold.}
\vspace{-0.5cm}
\end{table}

\subsection{Meta-Learning: Enhancing Generalization}
\label{sec:meta_learning}

Given meta tasks during the meta-training phase, meta-learning attempts to train meta learners that can generalize well to unseen tasks with just few examples during the meta-testing phase.
In this section, we present the effectiveness of VNE on the most prevalent meta-learning algorithms - MAML~\cite{finn2017model}, ANIL~\cite{raghu2019rapid}, BOIL~\cite{oh2020boil}, ProtoNet~\cite{snell2017prototypical}, Baseline~\cite{chen2019closer}, and Baseline++~\cite{chen2019closer}.
To reproduce the algorithms, we train Baseline, Baseline++, and ProtoNet based on an open source code base in~\cite{chen2019closer} and train MAML, ANIL, and BOIL using torchmeta~\cite{deleu2019torchmeta}. VNE is calculated for the penultimate representation of the standard 4-ConvNet models.
Our experiments are performed in 5-way 1-shot and in 5-way 5-shot with the mini-ImageNet~\cite{vinyals2016matching}, a standard benchmark dataset in few-shot learning.

Similar to domain generalization, we have analyzed the eigenvalue distribution of $\mathcal{C}_{\text{auto}}$, and the results are presented in Figure~\ref{figure:analysis_singular_fsl}.
At first glance, VNE$^+$ and VNE$^-$ successfully make the eigenvalue distribution more uniform and less uniform, respectively in Figure~\ref{figure:analysis_singular_fsl}(a). The corresponding von Neumann entropies are certainly increased by VNE$^+$ and decreased by VNE$^-$ in Figure~\ref{figure:analysis_singular_fsl}(b).
When we take a deeper look at the eigenvalues of Vanilla, we observe that meta-learning naturally utilizes a large number of eigenvalues (94\% of total).
In this context, we can hypothesize that meta-learning prefers utilizing a relatively large number of dimensions. In addition, the hypothesis is supported by the empirical results
where all of six popular benchmark algorithms in both 5-way 1-shot and 5-way 5-shot settings are improved by VNE$^+$ in Table~\ref{tab:meta_learning}. Note that VNE$^+$ consistently provides a gain for all the meta-learning benchmarks that we have investigated.

\begin{figure}[t!]
  \centering
  \subfloat[Eigenvalue distribution (ordered)]{
    \includegraphics[width=0.47\columnwidth]{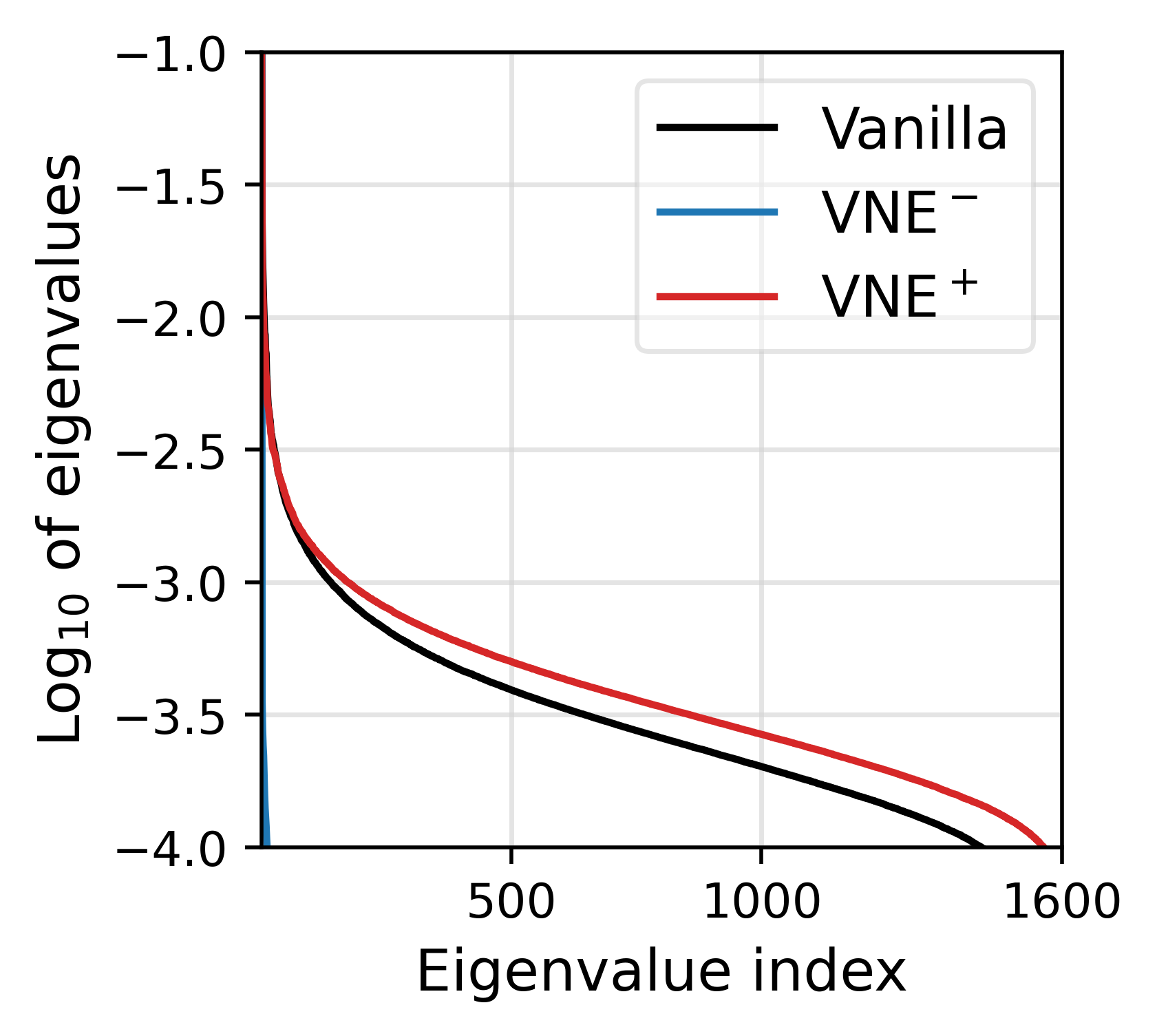}
  }
  \subfloat[von Neumann entropy]{
    \includegraphics[width=0.43\columnwidth]{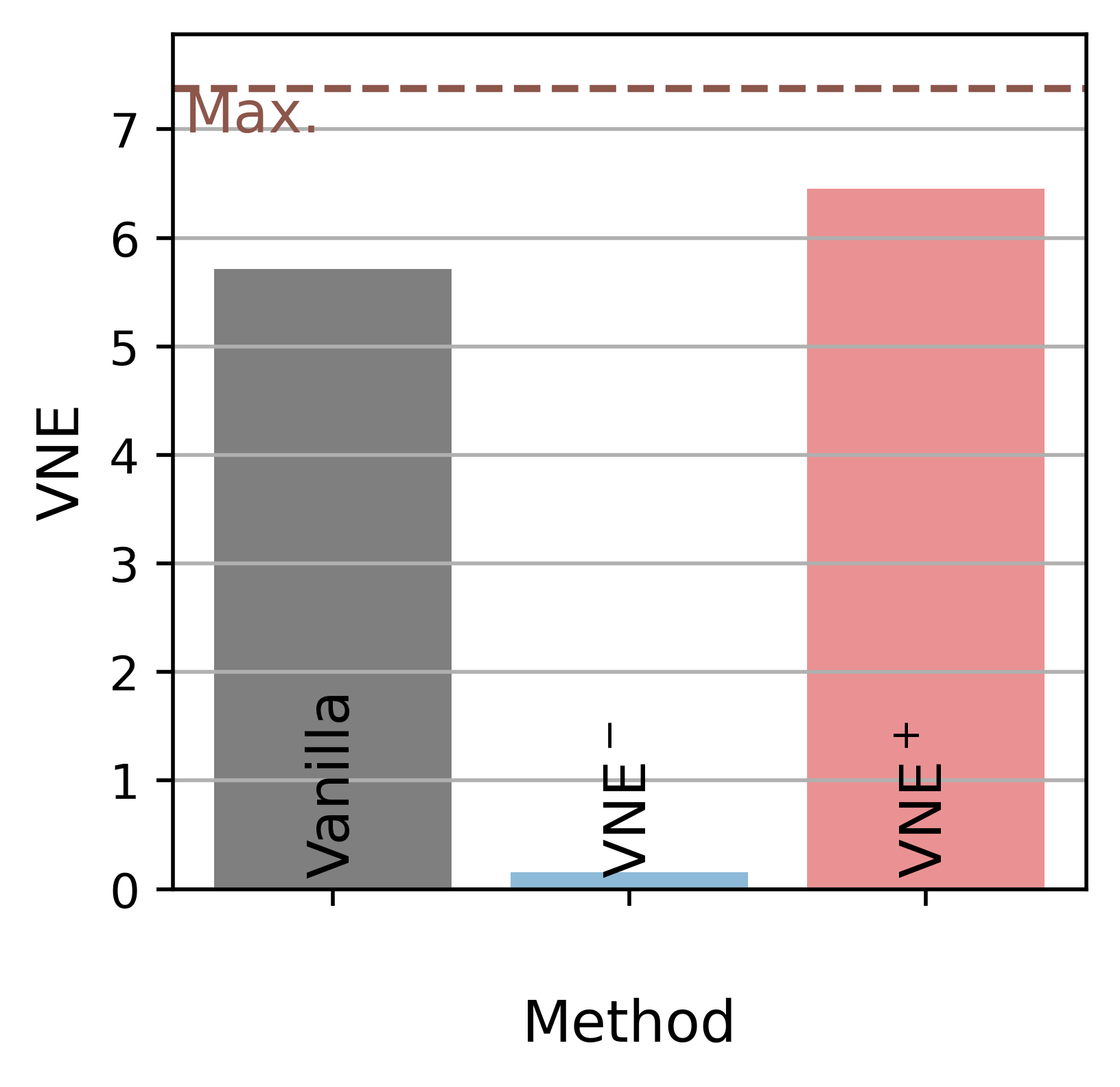}
  }
\vspace{-0.2cm}
\caption{\label{figure:analysis_singular_fsl}
Meta-learning: 
In (a), Eigenvalues of $\mathcal{C}_{\text{auto}}$ are presented in log$_{10}$ scale and descending order.
In (b), von Neumann entropies are presented.
4-ConvNet models are trained by the Baseline~\cite{chen2019closer} algorithm with mini-ImageNet for 100 epochs. Each representation has a dimension of 1600.
}
\vspace{-0.4cm}
\end{figure}

\begin{table}[h!]
\centering
\resizebox{0.85\columnwidth}{!}{
\begin{tabular}{@{}llcccc@{}}
\toprule
Method                                                      &              & \multicolumn{2}{c}{1-shot} & \multicolumn{2}{c}{5-shot} \\ \cmidrule(l){3-4} \cmidrule(l){5-6}
                                                            &              & Avg. Acc. (\%)         & Diff.  & Avg. Acc. (\%)      & Diff.     \\ \midrule
MAML~\cite{finn2017model}             & Vanilla & 48.86 $\pm$ 0.82   &                & 64.59 $\pm$ 0.88 &               \\
                                      & VNE$^-$ & 46.84 $\pm$ 0.76   & -2.02          & 62.57 $\pm$ 0.76 & -2.02         \\
                                      & VNE$^+$ & 50.14 $\pm$ 0.77   & \textbf{ 1.28}  & 66.42 $\pm$ 0.57 & \textbf{ 1.83} \\ \midrule
ANIL~\cite{raghu2019rapid}            & Vanilla & 46.70 $\pm$ 0.40   &                & 61.50 $\pm$ 0.50 &               \\
                                      & VNE$^-$ & 45.40 $\pm$ 0.52   & -1.30          & 60.14 $\pm$ 0.56 & -1.36         \\
                                      & VNE$^+$ & 48.20 $\pm$ 0.45   & \textbf{ 1.50}  & 63.42 $\pm$ 0.45 & \textbf{ 1.92} \\ \midrule
BOIL~\cite{oh2020boil}                & Vanilla & 49.61 $\pm$ 0.16   &                & 66.46 $\pm$ 0.37 &               \\
                                      & VNE$^-$ & 48.42 $\pm$ 0.34   & -1.19          & 65.34 $\pm$ 0.45 & -1.12         \\
                                      & VNE$^+$ & 50.95 $\pm$ 0.42   & \textbf{ 1.34}  & 67.52 $\pm$ 0.46 & \textbf{ 1.06} \\ \midrule
Baseline~\cite{chen2019closer}        & Vanilla & 45.41 $\pm$ 0.72   &                & 62.53 $\pm$ 0.69 &               \\
                                      & VNE$^-$ & 30.43 $\pm$ 0.72   & -14.98         & 48.03 $\pm$ 0.90 & -14.50        \\
                                      & VNE$^+$ & 47.03 $\pm$ 0.73   & \phantom{-0}\textbf{1.62}  & 65.85 $\pm$ 0.67 & \phantom{-0}\textbf{3.32} \\ \midrule
Baseline++~\cite{chen2019closer}      & Vanilla & 47.95 $\pm$ 0.74   &                & 66.43 $\pm$ 0.63 &               \\
                                      & VNE$^-$ & 29.52 $\pm$ 0.76   & -18.43         & 60.98 $\pm$ 0.78 & -5.45         \\
                                      & VNE$^+$ & 50.17 $\pm$ 0.77   & \phantom{-0}\textbf{2.22}  & 67.25 $\pm$ 0.67 & \textbf{ 0.82} \\ \midrule
ProtoNet~\cite{snell2017prototypical} & Vanilla & 43.16 $\pm$ 0.55   &                & 64.24 $\pm$ 0.72 &           \\
                                      & VNE$^-$ & -                  &                & 62.14 $\pm$ 0.69 & -2.10          \\
                                      & VNE$^+$ & 46.81 $\pm$ 0.35   & \textbf{ 3.65}  & 66.72 $\pm$ 0.71 & \textbf{ 2.48}          \\ \bottomrule
\end{tabular}
}
\vspace{-0.2cm}
\caption{\label{tab:meta_learning}Meta-learning: Performance evaluation of 5-way 1-shot and 5-way 5-shot with mini-ImageNet.}
\vspace{-0.5cm}
\end{table}

\subsection{SSL: Preventing Representation Collapse}
\label{sec:ssl}

Given an unlabelled dataset, self-supervised learning attempts to learn representation that makes various downstream tasks easier.
In this section, we demonstrate the effectiveness of VNE on self-supervised learning by proposing a novel method called \textit{I-VNE$^+$} where \textbf{I}nvariant loss is simply implemented by maximizing cosine similarity between positive pairs while consequent representation collapse is prevented by \textbf{VNE$^+$}. The loss is expressed as:
\vspace{-0.1cm}
\begin{equation}
\label{eq:ivne}
\mathcal{L}_{\text{I-VNE}^+}=-\alpha_1 \cdot \mathbb{E}_i[\text{sim}(\bm{h}^1_i,\bm{h}^2_i)]-\alpha_2 \cdot S(\mathcal{C}_{\text{auto}}),
\vspace{-0.1cm}
\end{equation}
where sim$(\bm{h}^1_i,\bm{h}^2_i)$ indicates the cosine similarity between two $i$th row vectors, $\bm{h}^1_i$ and $\bm{h}^2_i$, of representation matrices, $\bm{H}_1$ and $\bm{H}_2$, from two views, and $\mathcal{C}_{\text{auto}}$ is calculated for $\bm{H}_1$.
For experiments, we follow the standard training protocols from~\cite{grill2020bootstrap,zbontar2021barlow} (Refer to Supplementary~\ref{sec:implementation_delail} for more details) and the standard evaluation protocols from~\cite{misra2020self,goyal2019scaling,grill2020bootstrap,zbontar2021barlow}.

\begin{figure}[t!]
  \centering
  \subfloat[Eigenvalue distribution (ordered)]{
    \includegraphics[width=0.47\columnwidth]{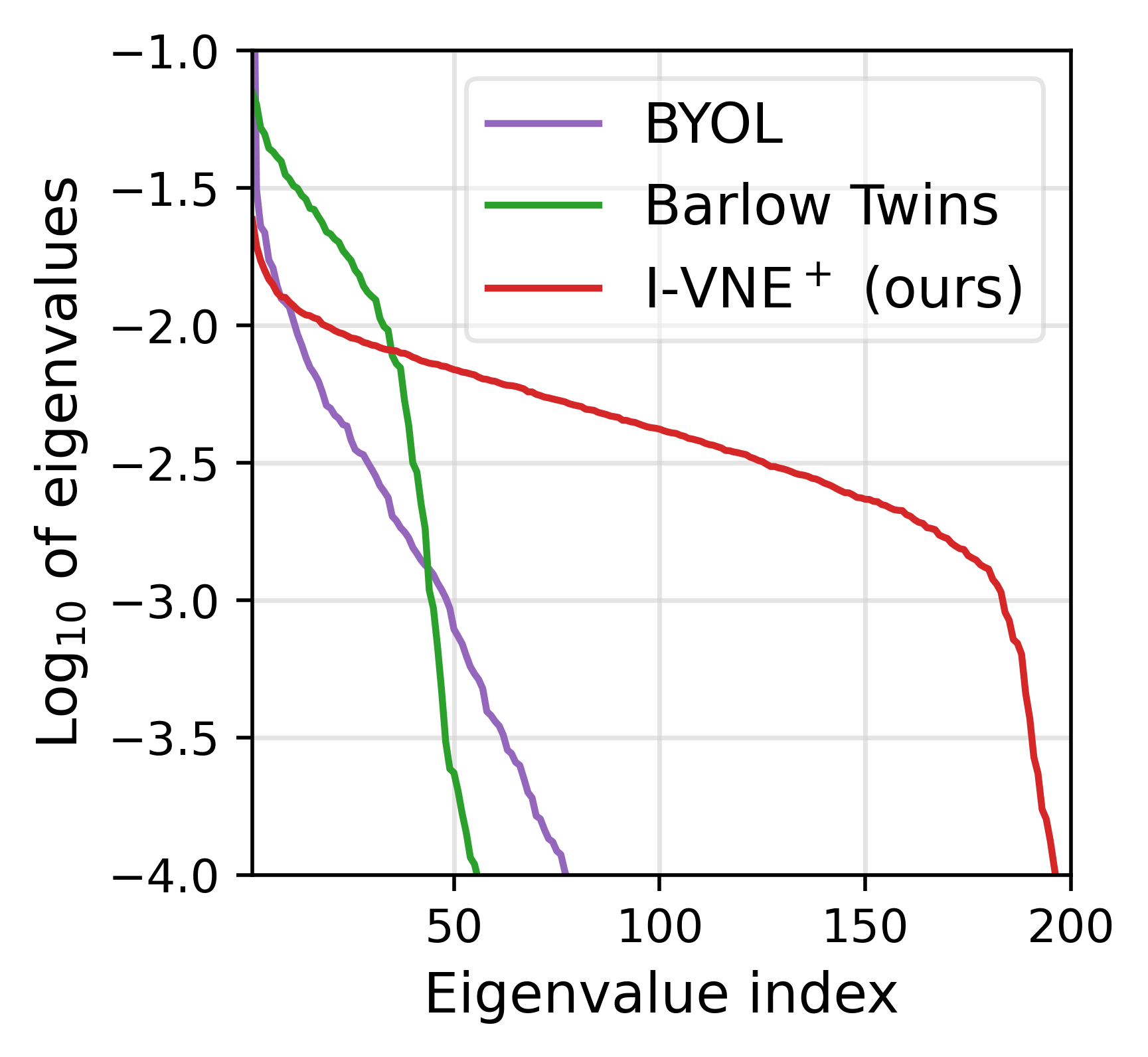}
  }
  \subfloat[von Neumann entropy]{
    \includegraphics[width=0.44\columnwidth]{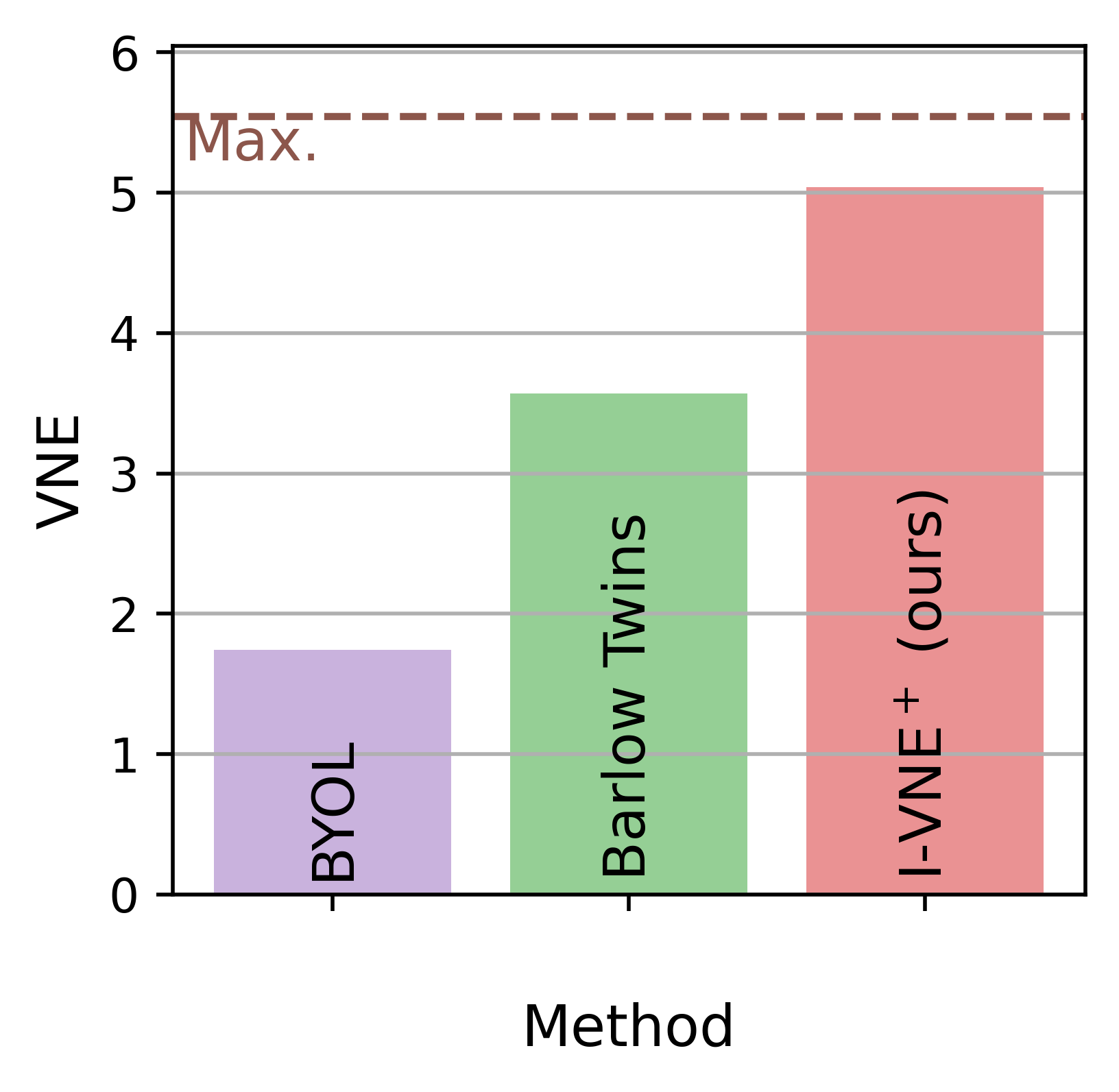}
  }
\vspace{-0.2cm}
\caption{\label{figure:analysis_singular_ssl}
SSL:
In (a), Eigenvalues of $\mathcal{C}_{\text{auto}}$ are presented in log$_{10}$ scale and descending order.
In (b), von Neumann entropies are presented.
ResNet-50 encoders and mlp projectors are trained by BYOL, Barlow Twins, and I-VNE$^+$ methods with ImageNet-100 for 100 epochs. Projectors for all methods have the same output dimension of 256.
}
\vspace{-0.2cm}
\end{figure}

In fact, the loss formulation in Eq.~\eqref{eq:ivne} is equivalent to a simple combination of the loss term in BYOL~\cite{grill2020bootstrap} and VNE$^+$ without predictor and stop gradient. $S(\mathcal{C}_{\text{auto}})$ term in I-VNE$^+$ can also replace the redundancy reduction term in Barlow Twins~\cite{zbontar2021barlow}.
Therefore, we have analyzed the eigenvalue distribution of $\mathcal{C}_{\text{auto}}$ and $S(\mathcal{C}_{\text{auto}})$ by comparing with BYOL and Barlow Twins, and the results are presented in Figure~\ref{figure:analysis_singular_ssl}. Simply put, I-VNE$^+$ utilizes more eigenvalues of $\mathcal{C}_{\text{auto}}$ and has a larger value of $S(\mathcal{C}_{\text{auto}})$ than the others.
Because I-VNE$^+$ utilizes more eigen-dimensions of $\mathcal{C}_{\text{auto}}$ than the others,
the dimensional collapse problem prevailing in SSL~\cite{hua2021feature,jing2021understanding} can be mitigated by I-VNE$^+$; hence better performance with I-VNE$^+$ can be expected.

\begin{table}[t!]
\centering
\subfloat[CIFAR-10]{
    \resizebox{.493\columnwidth}{!}{
\begin{tabular}{@{}lcc@{}}
\toprule
Method                                            & Epoch & Top-1 \\ \midrule
Supervised \cite{chen2020simple} &       & 95.1  \\ \midrule
NT-Xent \cite{chen2021intriguing}               & 200   & 91.3  \\
Decoupled NT-Xent \cite{chen2021intriguing}     & 200   & 91.3  \\
SWD \cite{chen2021intriguing}                   & 200   & 90.8  \\
NT-Xent \cite{chen2021intriguing}               & 800   & 93.9  \\
Decoupled NT-Xent \cite{chen2021intriguing}     & 800   & 94.0  \\
SWD \cite{chen2021intriguing}                   & 800   & 94.1  \\ 
Shuffled-DBN \cite{hua2021feature}              & 200   & 89.5  \\ \midrule
I-VNE$^+$ (ours)                                        & 200   & \textbf{94.3}  \\
I-VNE$^+$ (ours)                                        & 400   & \textbf{94.3}  \\ \bottomrule
\end{tabular}
}
}
\hspace{-0.2cm}
\subfloat[ImageNet-100]{
    \resizebox{.477\columnwidth}{!}{
\begin{tabular}{@{}lcc@{}}
\toprule
Method                                                      & Epoch & Top-1 \\ \midrule
Supervised \cite{kalantidis2020hard}       &       & 86.2        \\ \midrule
Align.+Uniform. \cite{wang2020understanding} & 240   & 74.6        \\
CMC (K=1) \cite{zheng2021contrastive}      & 200   & 75.8        \\
CMC (K=4) \cite{zheng2021contrastive}      & 200   & 78.8        \\
CACR(K=1) \cite{zheng2021contrastive}      & 200   & 79.4        \\
CACR(K=4) \cite{zheng2021contrastive}      & 200   & 80.5        \\
LooC++ \cite{xiao2020should}               & 500   & 82.2        \\
MoCo-v2+MoCHi \cite{kalantidis2020hard}    & 800   & 84.5        \\ \midrule
I-VNE$^+$ (ours)                           & 200   & 84.7        \\
I-VNE$^+$ (ours)                           & 800   & \textbf{86.3}        \\ \bottomrule
\end{tabular}
}
}
\vspace{-0.2cm}
\caption{\label{tab:ssl_linear}SSL: Linear evaluation performance for various representation learning methods. They are all based on ResNet-50 encoders pre-trained with various datasets. Linear classifier on top of the frozen pre-trained model is trained with labels. State-of-the-art methods are included and the best results are indicated in bold.
}
\vspace{-0.3cm}
\end{table}

To evaluate I-VNE$^+$, we compare benchmark performance with prior state-of-the-art methods.
In Table~\ref{tab:ssl_linear}(a) and (b),
I-VNE$^+$ outperforms prior state-of-the-art linear evaluation benchmarks in both CIFAR-10 and ImageNet-100. Moreover, I-VNE$^+$ even surpasses the supervised performance in ImageNet-100.
In ImageNet-1K, I-VNE$^+$ shows competitive linear evaluation performance which is above the average (71.8\%) as demonstrated in Table~\ref{tab:ssl_linear_appendix} of Supplementary~\ref{sec:supplementary_results}.
In addition, we can show that the pre-trained model of ImageNet-1K shows state-of-the-art performance in the following evaluation benchmarks.
In Table~\ref{tab:ssl_semi_supervised}, I-VNE$^+$ outperforms all the semi-supervised learning benchmarks except for Top-1 accuracy with 10\% data regime.
In Table~\ref{tab:ssl_transfer}, I-VNE$^+$ outperforms all the transfer learning benchmarks with COCO.
The results indicate that I-VNE$^+$ is advantageous for more sophisticated tasks such as low-data regime (semi-supervised) and out-of-domain (transfer learning with COCO) tasks.

\begin{table}[t!]
\centering
\resizebox{0.6\columnwidth}{!}{
\begin{tabular}{@{}lcccc@{}}
\toprule
Method                                                 & \multicolumn{2}{c}{Top-1}     & \multicolumn{2}{c}{Top-5}     \\ \cmidrule(lr){2-3}\cmidrule(lr){4-5}
                                                      & 1$\%$         & 10$\%$        & 1$\%$         & 10$\%$        \\ \midrule
Supervised \cite{chen2020simple}      & 25.4          & 56.4          & 48.4          & 80.4          \\ \midrule
SimCLR \cite{chen2020simple}          & 48.3          & 65.6          & 75.5          & 87.8          \\
BYOL \cite{grill2020bootstrap}        & 53.2          & 68.8          & 78.4          & 89.0          \\
SwAV \cite{caron2020unsupervised}     & 53.9          & \textbf{70.2}          & 78.5          & \textbf{89.9}          \\ 
VICReg \cite{bardes2021vicreg}        & 54.8          & 69.5          & 79.4          & 89.5          \\
Barlow Twins \cite{zbontar2021barlow} & 55.0          & 69.7          & 79.2          & 89.3          \\ \midrule
I-VNE$^+$ (ours)                            & \textbf{55.8} & 69.1 & \textbf{81.0} & \textbf{89.9} \\ \bottomrule
\end{tabular}
}
\vspace{-0.2cm}
\caption{\label{tab:ssl_semi_supervised}SSL: Semi-supervised learning evaluation performance for various representation learning methods. They are all based on ResNet-50 encoders pre-trained with ImageNet-1K. Pre-trained models are fine-tuned with 1\% and 10\% labels of ImageNet-1K. State-of-the-art methods are included and the best results are indicated in bold.
}
\vspace{-0.2cm}
\end{table}

\begin{table}[t!]
\centering
\resizebox{0.85\columnwidth}{!}{
\begin{tabular}{@{}lcccccc@{}}
\toprule
Method                                                 & \multicolumn{3}{c}{COCO det.}                 & \multicolumn{3}{c}{COCO instance seg.}                                 \\ \cmidrule(lr){2-4}\cmidrule(lr){5-7}
                                                      & AP            & AP$_{50}$     & AP$_{75}$     & AP$^{\text{mask}}$ & AP$_{50}^{\text{mask}}$ & AP$_{75}^{\text{mask}}$ \\ \midrule
Scratch \cite{chen2021exploring}      & 26.4          & 44.0          & 27.8          & 29.3               & 46.9                    & 30.8                    \\
Supervised \cite{chen2021exploring}   & 38.2          & 58.2          & 41.2          & 33.3               & 54.7                    & 35.2                    \\ \midrule
SimCLR \cite{chen2021exploring}       & 37.9          & 57.7          & 40.9          & 33.3               & 54.6                    & 35.3                    \\
BYOL \cite{chen2021exploring}         & 37.9          & 57.8          & 40.9          & 33.2               & 54.3                    & 35.0                    \\
SwAV \cite{zbontar2021barlow}         & 38.4          & 58.6          & 41.3          & 33.8               & 55.2                    & 35.9                    \\
SimSiam \cite{chen2021exploring}      & 39.2          & 59.3 & 42.1          & 34.4 & 56.0 & 36.7  \\
MoCov2 \cite{zbontar2021barlow}       & 39.3 & 58.9 & 42.5  & 34.4  & 55.8                    & 36.5                    \\ 
Barlow Twins \cite{zbontar2021barlow} & 39.2          & 59.0          & 42.5 & 34.3               & 56.0                    & 36.5                    \\ \midrule
I-VNE$^+$ (ours)                                             & \textbf{41.2} & \textbf{61.3} & \textbf{44.6} & \textbf{35.7}      & \textbf{57.9}           & \textbf{38.0}           \\ \bottomrule
\end{tabular}
}
\vspace{-0.2cm}
\caption{\label{tab:ssl_transfer}SSL: Transfer learning evaluation performance for various representation learning methods. They are all based on ResNet-50 encoders pre-trained in ImageNet-1K. Pre-trained models are fine-tuned with COCO detection and instance segmentation tasks using Mask R-CNN with C4-backbone~\cite{he2017mask, wu2019detectron2} and using 1 $\times$ schedule. State-of-the-art methods are included and the best results are indicated in bold.
}
\vspace{-0.5cm}
\end{table}

\subsection{GAN: Preventing Mode Collapse}
\label{sec:gan}

In Section~\ref{sec:ssl}, VNE$^+$ has successfully prevented representation collapse.
As another example for collapse prevention, we consider the mode collapse in GAN.
The GAN training usually ends up with (partial) mode collapse~\cite{goodfellow2016nips}, where generative models suffer lack of diversity.
To demonstrate that this problem can be solved by VNE$^+$,
we reproduce various GAN methods based on an open source code base, StudioGAN~\cite{kang2022StudioGAN} and train all models with CIFAR-10 for 100 epochs.
To evaluate the models, we report the Inception Score~\cite{salimans2016improved} (IS, higher is better) and the Fréchet Inception Distance~\cite{heusel2017gans} (FID, lower is better). Although both IS and FID are the most popular metrics for evaluating generative models, FID is known to favor more diversified images~\cite{brock2018large}.
Table~\ref{tab:sup_gan} demonstrate that the overall quality of the output, especially diversity, has been improved by VNE$^+$ because FID scores have been improved. IS has also been improved.

\begin{table}[t!]
\centering
\resizebox{0.9\columnwidth}{!}{
\begin{tabular}{@{}lcccccc@{}}
\toprule
 & \multicolumn{3}{c}{Inception Score~$\uparrow$}         & \multicolumn{3}{c}{Fréchet Inception Distance~$\downarrow$} \\ \cmidrule(){1-1} \cmidrule(l){2-4} \cmidrule(l){5-7}
Method            & Vanilla & VNE$^+$ & Diff.                 & Vanilla  & VNE$^+$ & Diff.                   \\ \midrule
DCGAN             & 6.49     & 6.74     & $\uparrow$~0.25        & 42.55     & 35.44    & $\downarrow$~7.11        \\
PD-GAN            & 7.83     & 8.01     & $\uparrow$~0.18        & 28.02     & 23.54    & $\downarrow$~4.48        \\
LOGAN             & 8.02     & 8.15     & $\uparrow$~0.13        & 18.88     & 17.17    & $\downarrow$~1.71        \\
WGAN-GP           & 7.37     & 7.42     & $\uparrow$~0.05        & 24.62     & 23.31    & $\downarrow$~1.31        \\
SAGAN             & 8.86     & 8.90     & $\uparrow$~0.04        & \phantom{0}9.55      & \phantom{0}8.91     & $\downarrow$~0.64        \\
SNGAN             & 8.85     & 8.86     & $\uparrow$~0.01        & \phantom{0}9.97      & \phantom{0}9.41     & $\downarrow$~0.56        \\
BigGAN            & 9.82     & 9.83     & $\uparrow$~0.01        & \phantom{0}5.34      & \phantom{0}5.18     & $\downarrow$~0.16        \\ \bottomrule
\end{tabular}
}
\vspace{-0.2cm}
\caption{\label{tab:sup_gan}GAN: Performance evaluation results.
}
\vspace{-0.4cm}
\end{table}

\section{Theoretical Connections of VNE}
\label{sec:propertiesofvne}
\vspace{-0.1cm}

In Section~\ref{sec:vonneumannentropy}, we have examined the popular regularization objective of $\mathcal{C}_{\text{auto}} \rightarrow c \cdot I_d$ and explained how von Neumann entropy can be a desirable regularization method. In addition, von Neumann entropy can be beneficial in a few different ways because of its conceptual connection with conventional representation properties such as rank, disentanglement, and isotropy. In this section, we establish a theoretical connection with each property and provide a brief discussion.

\subsection{Rank of Representation}

The rank of representation, $\text{rank}(\mathcal{C}_{\text{auto}})$, directly measures the number of dimensions utilized by the representation. Von Neumann Entropy in Eq.~(\ref{eq:vne_formula}) is closely related to the rank, where it is maximized when $\mathcal{C}_{\text{auto}}$ is full rank with uniformly distributed eigenvalues 
and it is minimized when $\mathcal{C}_{\text{auto}}$ is rank one. In fact, a formal bound between rank and VNE can be derived. 
\vspace{-0.1cm}
\begin{theorem}[Rank and VNE]
\label{thm:rank}
For a given representation autocorrelation $\mathcal{C}_{\text{auto}} = \bm{H}^T\bm{H}/N \in \mathbb{R}^{d \times d}$ of rank $k$ $(\le d)$,
\vspace{-0.3cm}
\begin{equation}
\text{log}(\text{rank}(\mathcal{C}_{\text{auto}})) \ge S(\mathcal{C}_{\text{auto}}),
\end{equation}
where equality holds iff the eigenvalues of $\mathcal{C}_{\text{auto}}$ are uniformly distributed with $\forall_{j=1}^k \lambda_{j}=1/k$ and $\forall_{j=k+1}^d \lambda_{j}=0$.
\vspace{-0.5cm}
\end{theorem}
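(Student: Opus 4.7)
The plan is to reduce the statement to the standard maximum-entropy bound for the Shannon entropy of a discrete probability distribution, applied to the nonzero eigenvalues of $\mathcal{C}_{\text{auto}}$ viewed as a probability vector.

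First I would set up the spectrum. As noted just after Eq.~(\ref{def:density_rho}), the eigenvalues $\{\lambda_j\}_{j=1}^d$ of $\mathcal{C}_{\text{auto}}$ are nonnegative and satisfy $\sum_j \lambda_j = 1$ (the latter because $\mathrm{tr}(\mathcal{C}_{\text{auto}}) = \frac{1}{N}\sum_i \|\bm h_i\|_2^2 = 1$ under the $L_2$-normalization assumption). Since $\mathrm{rank}(\mathcal{C}_{\text{auto}}) = k$, exactly $k$ of the $\lambda_j$ are strictly positive; after relabeling, assume $\lambda_1,\ldots,\lambda_k > 0$ and $\lambda_{k+1}=\cdots=\lambda_d=0$. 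Using the convention $0 \log 0 = 0$, the definition in Eq.~(\ref{eq:vne_formula}) collapses to
\begin{equation}
S(\mathcal{C}_{\text{auto}}) = -\sum_{j=1}^{k} \lambda_j \log \lambda_j,
\end{equation}
so $\{\lambda_1,\ldots,\lambda_k\}$ is a genuine probability distribution on $k$ atoms and $S(\mathcal{C}_{\text{auto}})$ is its Shannon entropy.

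Next I would invoke the classical maximum-entropy bound for a discrete distribution on $k$ outcomes: $H(p_1,\ldots,p_k) \le \log k$, with equality iff $p_j = 1/k$ for all $j$. This follows from Jensen's inequality applied to the strictly concave function $x \mapsto -x\log x$, or equivalently from the nonnegativity of the Kullback--Leibler divergence $D_{\mathrm{KL}}(\lambda \,\|\, u) = \sum_{j=1}^k \lambda_j \log(\lambda_j \cdot k) \ge 0$, where $u$ is the uniform distribution on $\{1,\ldots,k\}$; expanding this gives $\log k - S(\mathcal{C}_{\text{auto}}) \ge 0$. Since $k = \mathrm{rank}(\mathcal{C}_{\text{auto}})$, this is exactly the claimed inequality $\log(\mathrm{rank}(\mathcal{C}_{\text{auto}})) \ge S(\mathcal{C}_{\text{auto}})$.

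Finally I would justify the equality characterization. Strict concavity of $-x \log x$ (equivalently, strict positivity of $D_{\mathrm{KL}}$ for distinct distributions) forces equality in the bound above to hold iff $\lambda_1 = \cdots = \lambda_k = 1/k$, which combined with $\lambda_{k+1}=\cdots=\lambda_d=0$ is precisely the uniform-on-the-support condition in the theorem. There is no real obstacle here; the only subtle point is bookkeeping of the zero eigenvalues via the $0\log 0 = 0$ convention, so that the rank appears naturally as the cardinality of the support of the induced distribution.
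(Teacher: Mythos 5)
Your proposal is correct and follows essentially the same route as the paper's proof: both reduce the claim to the maximum-entropy bound $H(\lambda_1,\ldots,\lambda_k)\le\log k$ on the $k$ nonzero eigenvalues (the paper cites this as its Lemma~\ref{lem:entropy}, you justify it via Jensen/KL), handle the zero eigenvalues through the $0\log 0=0$ convention, and obtain the equality case from strict concavity. No gaps to report.
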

Refer to Supplementary~\ref{sec:prop_proof} for the proof. Theorem~\ref{thm:rank} states that $\text{log}(\text{rank}(\mathcal{C}_{\text{auto}}))$ is lower bounded by $S(\mathcal{C}_{\text{auto}})$ and that the bound is tight when non-zero eigenvalues are uniformly distributed. The close relationship between rank and VNE can also be confirmed empirically. For the VNE plots in Figure~\ref{figure:analysis_singular_dg}(b) and Figure~\ref{figure:analysis_singular_fsl}(b), we have compared their rank values and the results are presented in Figure~\ref{figure:analysis_rank}.

Although the rank is a meaningful and useful measure of $\mathcal{C}_{\text{auto}}$, it cannot be directly used for learning because of its discrete nature. In addition, it can be misleading because even extremely small non-zero eigenvalues contribute toward the rank. VNE can be a useful proxy of the rank because it does not suffer from either of the problems. 

\begin{figure}[t!]
\centering
\subfloat[Domain Generalization]{
\includegraphics[width=0.425\columnwidth]{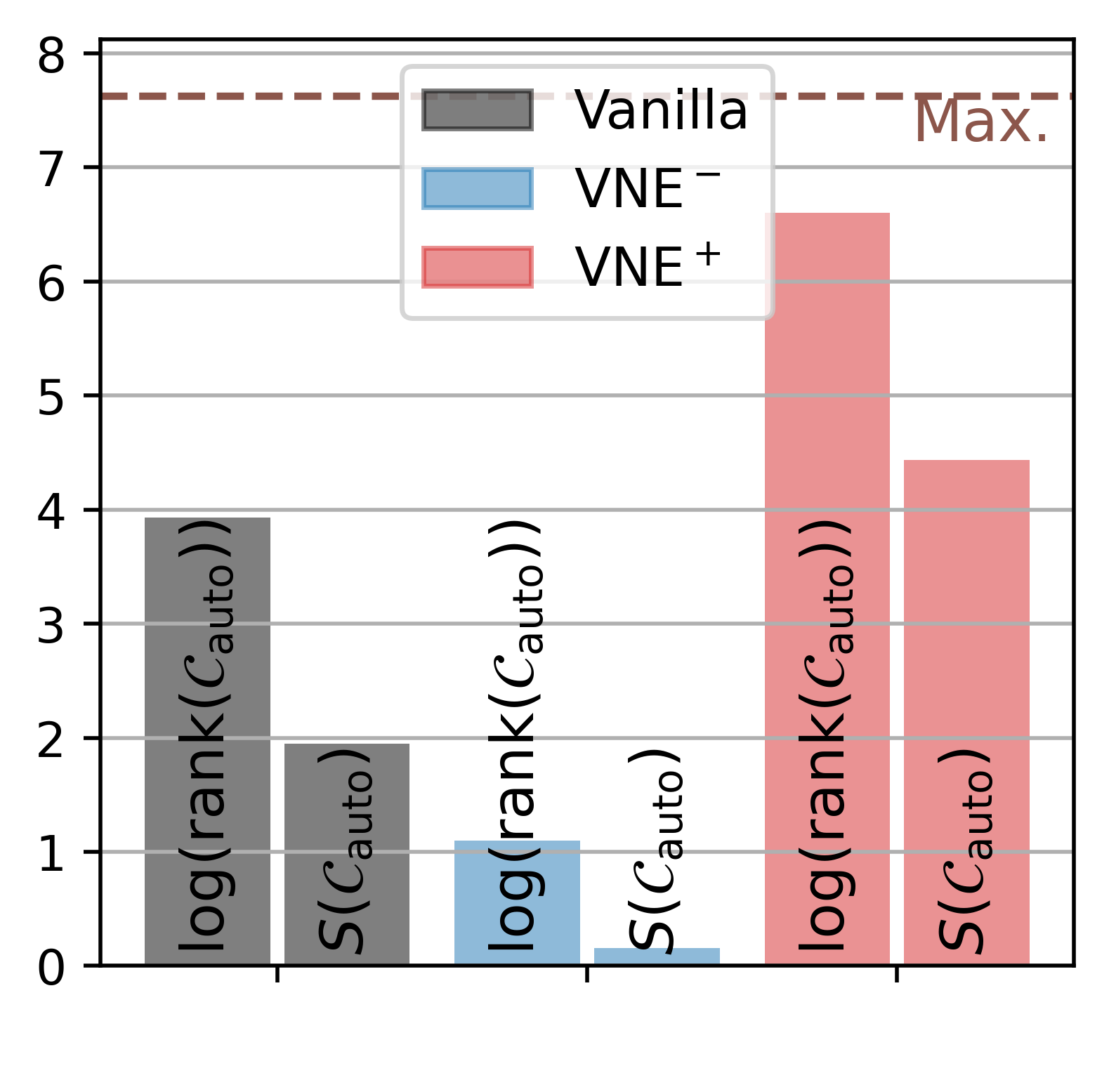}
}
\subfloat[Meta-learning]{
\includegraphics[width=0.425\columnwidth]{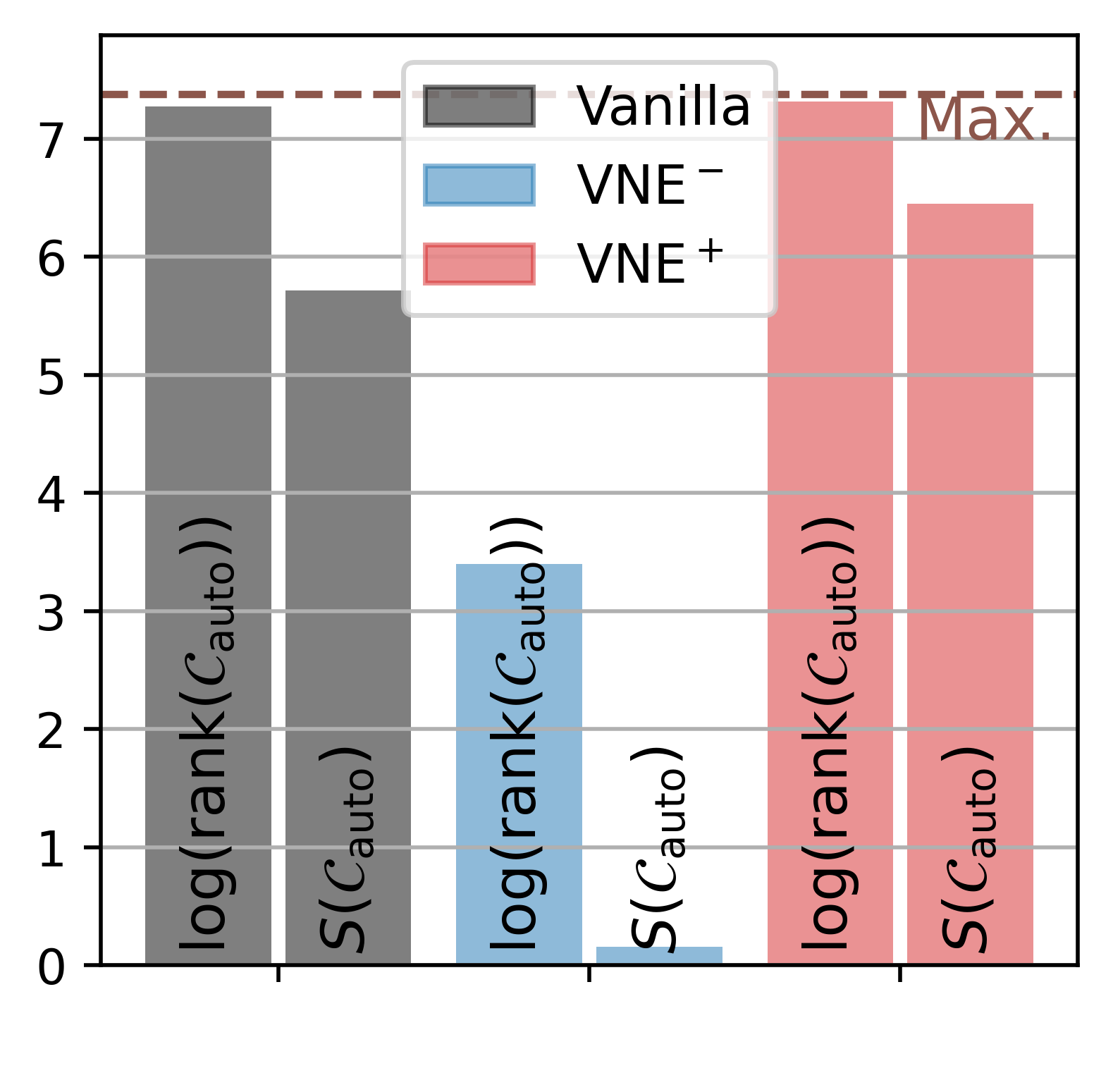}
}
\vspace{-0.2cm}
\caption{\label{figure:analysis_rank} Rank of representation: Comparison between $\text{log}(\text{rank}(\mathcal{C}_{\text{auto}}))$ and $S(\mathcal{C}_{\text{auto}})$.
As a surrogate of the rank, the count of the largest eigenvalues required for the 99\% of total eigenvalue energy is presented.
The possible maximum value, $\log{d}$, is depicted as the brown dotted line.
}
\vspace{-0.45cm}
\end{figure}

\subsection{Disentanglement of Representation}
Although disentanglement has been considered as a desirable property of representation~\cite{bengio2009learning,achille2018emergence}, its formal definition can be dependent on the context of the research. In this study, we adopt the definition in~\cite{achille2018emergence}, where a representation vector $\bm{h}$ is disentangled if its scalar components are independent. To understand the relationship between von Neumann entropy and disentanglement, we derive a theoretical result under a multi-variate Gaussian assumption and provide an empirical analysis. The assumption can be formally described as:
\vspace{-0.1cm}
\begin{assumption}
\label{assumption:zero_mean_gaussian}
We assume that representation $\bm{h}$ follows zero-mean multivariate Gaussian distribution.
In addition, we assume that the components of $\bm{h}$ (denoted as $\bm{h}^{(i)}$) have homogeneous variance of $\frac{1}{d}$, i.e., $\forall_{i=1}^d\bm{h}^{(i)} \sim \mathcal{N} (0, \frac{1}{d})$.
\end{assumption}

The multi-variate Gaussian assumption is not new, and it has been utilized in numerous studies. For instance, \cite{kingma2013auto,lee2017deep,yang2021free} adopted the assumption. In addition, the assumption was proven to be true for infinite width neural networks~\cite{neal1996priors,williams1997computing,neal2012bayesian,lee2017deep}. Numerous studies applied a representation normalization to have a homogeneous variance~(e.g., via batch normalization~\cite{ioffe2015batch}). Under the Assumption~\ref{assumption:zero_mean_gaussian}, our main result can be stated as below.
\begin{theorem}[Disentanglement and VNE]
\label{thm:disentanglement}
Under the Assumption~\ref{assumption:zero_mean_gaussian}, $\bm{h}$ is disentangled if $S(\mathcal{C}_{\text{auto}})$ is maximized.
\end{theorem}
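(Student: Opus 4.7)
The plan is to exploit the well-known equivalence, for multivariate Gaussians, between mutual independence of the coordinates and diagonality of the covariance matrix. The whole argument then reduces to showing that maximizing $S(\mathcal{C}_{\text{auto}})$ forces $\mathcal{C}_{\text{auto}}$ to be a scalar multiple of the identity.

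First, I would unfold the relationship between $\mathcal{C}_{\text{auto}}$ and the covariance of $\bm{h}$ under Assumption~\ref{assumption:zero_mean_gaussian}. Interpreting $\mathcal{C}_{\text{auto}}$ in its population form $\mathbb{E}[\bm{h}\bm{h}^{T}]$, the zero-mean hypothesis gives $\mathcal{C}_{\text{auto}} = \Sigma$, and the homogeneous-variance hypothesis pins every diagonal entry to $1/d$. In particular $\mathrm{tr}(\mathcal{C}_{\text{auto}}) = 1$, which matches the normalization noted beneath Eq.~(\ref{def:density_rho}).

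Second, I would invoke the entropy bound referenced in the excerpt (Lemma~\ref{lem:entropy} of Supplementary~\ref{sec:prop_proof}): among all PSD matrices with unit trace, $S(\mathcal{C}_{\text{auto}}) \le \log d$, with equality iff every eigenvalue equals $1/d$. Since the only symmetric PSD matrix whose spectrum is the constant $1/d$ is $\tfrac{1}{d} I_{d}$ itself, the hypothesis that $S(\mathcal{C}_{\text{auto}})$ is maximized pins $\mathcal{C}_{\text{auto}} = \tfrac{1}{d} I_{d}$ as a matrix, so every off-diagonal entry of $\Sigma$ vanishes.

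Finally, combining the two steps, $\Sigma = \tfrac{1}{d} I_{d}$ is diagonal, and for a zero-mean multivariate Gaussian, zero pairwise covariance is equivalent to full mutual independence of the components $\bm{h}^{(i)}$ (the joint density factorizes into the product of the marginal $\mathcal{N}(0, 1/d)$ densities). By the definition of disentanglement adopted from~\cite{achille2018emergence}, this is exactly the claim that $\bm{h}$ is disentangled. The argument is a short chain of standard implications; the only point demanding a moment of care is the transition in the second step, where one must notice that equality in the entropy bound, combined with the already-fixed diagonal of $1/d$, uniquely determines the full matrix rather than just its spectrum. No further obstacles are expected.
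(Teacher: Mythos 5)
Your proposal is correct, but it reaches the conclusion by a somewhat different route than the paper. Both arguments hinge on the same key fact (the paper's Lemma~\ref{lem:entropy}): under the unit-trace constraint, $S(\mathcal{C}_{\text{auto}})$ attains its maximum $\log d$ iff every eigenvalue equals $1/d$. From there you argue spectrally: a symmetric PSD matrix whose eigenvalues are all $1/d$ must be $\tfrac{1}{d}\bm{I}_d$ (indeed the spectrum alone forces this, via $\bm{\Sigma} = \bm{Q}(\tfrac{1}{d}\bm{I})\bm{Q}^T = \tfrac{1}{d}\bm{I}$, so the ``already-fixed diagonal'' you flag as the delicate point is not even needed there --- Assumption~\ref{assumption:zero_mean_gaussian} enters only to give $\operatorname{tr}(\mathcal{C}_{\text{auto}})=1$ and hence the $1/d$ normalization), and then you invoke the standard fact that a zero-mean Gaussian with diagonal covariance has independent components. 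The paper instead never identifies $\mathcal{C}_{\text{auto}}=\tfrac{1}{d}\bm{I}_d$ explicitly: it introduces a reference isotropic Gaussian $\bm{h}'\sim\mathcal{N}(0,\tfrac{1}{d}\bm{I}_d)$, writes the total correlation $TC(\bm{h}) = D_{\mathrm{KL}}(p(\bm{h})\,\Vert\,\prod_i p(\bm{h}^{(i)}))$, and evaluates it with the closed-form Gaussian KL formula (Lemma~\ref{lem:kld_gaussian}), using only $\operatorname{tr}(\bm{\Sigma}_1)=1$ and $|\bm{\Sigma}_1|=(1/d)^d$ to get $TC(\bm{h})=0$, which is exactly the disentanglement criterion of~\cite{achille2018emergence}. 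Your version is more elementary (no KL computation, no auxiliary distribution) and makes the structural conclusion $\mathcal{C}_{\text{auto}}=\tfrac{1}{d}\bm{I}_d$ explicit, which ties nicely to the whitening/isotropy discussion elsewhere in the paper; the paper's version stays inside the total-correlation formalism used to define disentanglement and shows that only the trace and determinant of the covariance are needed. Like the paper, you implicitly identify the mini-batch autocorrelation with the population quantity $\mathbb{E}[\bm{h}\bm{h}^T]$, so that is not a gap relative to the reference proof.
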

Refer to Supplementary~\ref{sec:prop_proof} for the proof.
Theorem~\ref{thm:disentanglement} states that the Gaussian representation $\bm{h}$ is disentangled if von Neumann entropy $S(\mathcal{C}_{\text{auto}})$ is fully maximized. The theoretical result can also be confirmed with an empirical analysis. For the domain-generalization experiment in Section~\ref{sec:experiments}, we have randomly chosen two components $\bm{h}^{(i)}$ and $\bm{h}^{(j)}$, where $i \ne j$, and compared their cosine similarity for the examples in the mini-batch. The resulting distributions are presented in Figure~\ref{figure:analysis_sim_units}(a). It can be clearly observed that VNE$^+$ makes the linear dependence between 
two components to be significantly weaker (cosine similarity closer to zero) while VNE$^-$ can make it stronger. The same behavior can be observed for a supervised learning example in Figure~\ref{figure:analysis_sim_units}(b). Therefore, the representation components are decorrelated by VNE$^+$ and correlated by VNE$^-$. 
For meta-learning, the trend is the same, but the shift in the distribution turns out to be relatively limited (see Figure~\ref{figure:supple_dis} in Supplementary~\ref{sec:supplementary_results}). 

Similar to the case of rank, von Neumann entropy can be utilized as a proxy for controlling the degree of disentanglement in representation. In the case of supervised learning in Figure~\ref{figure:analysis_sim_units}(b), it can be observed that both highly disentangled and highly entangled representations can be learned by regularizing von Neumann entropy.

\begin{figure}[t!]
\centering
\subfloat[Domain Generalization]{
\includegraphics[width=0.45\columnwidth]{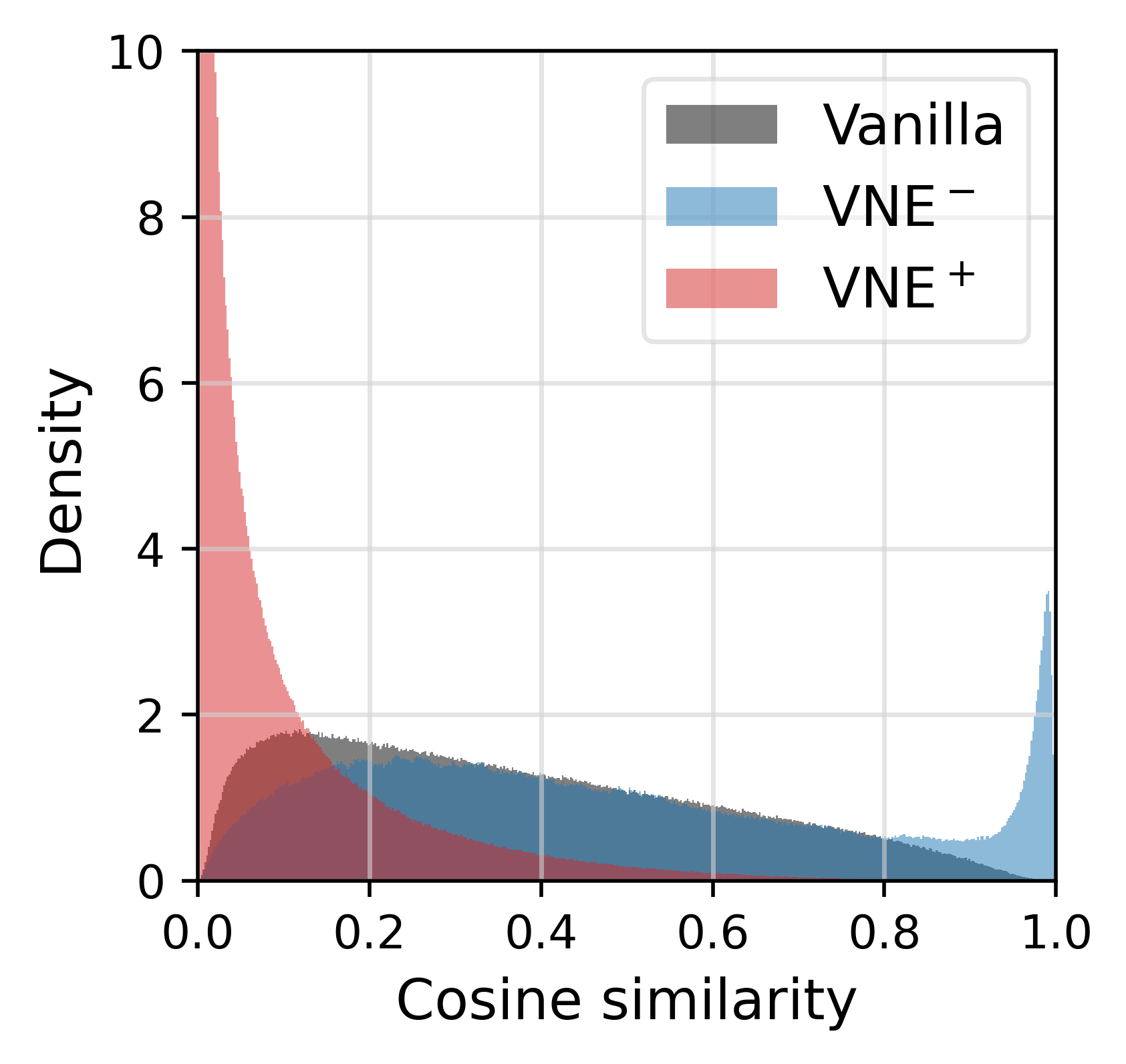}
}
\subfloat[Supervised (ImageNet-100)]{
\includegraphics[width=0.45\columnwidth]{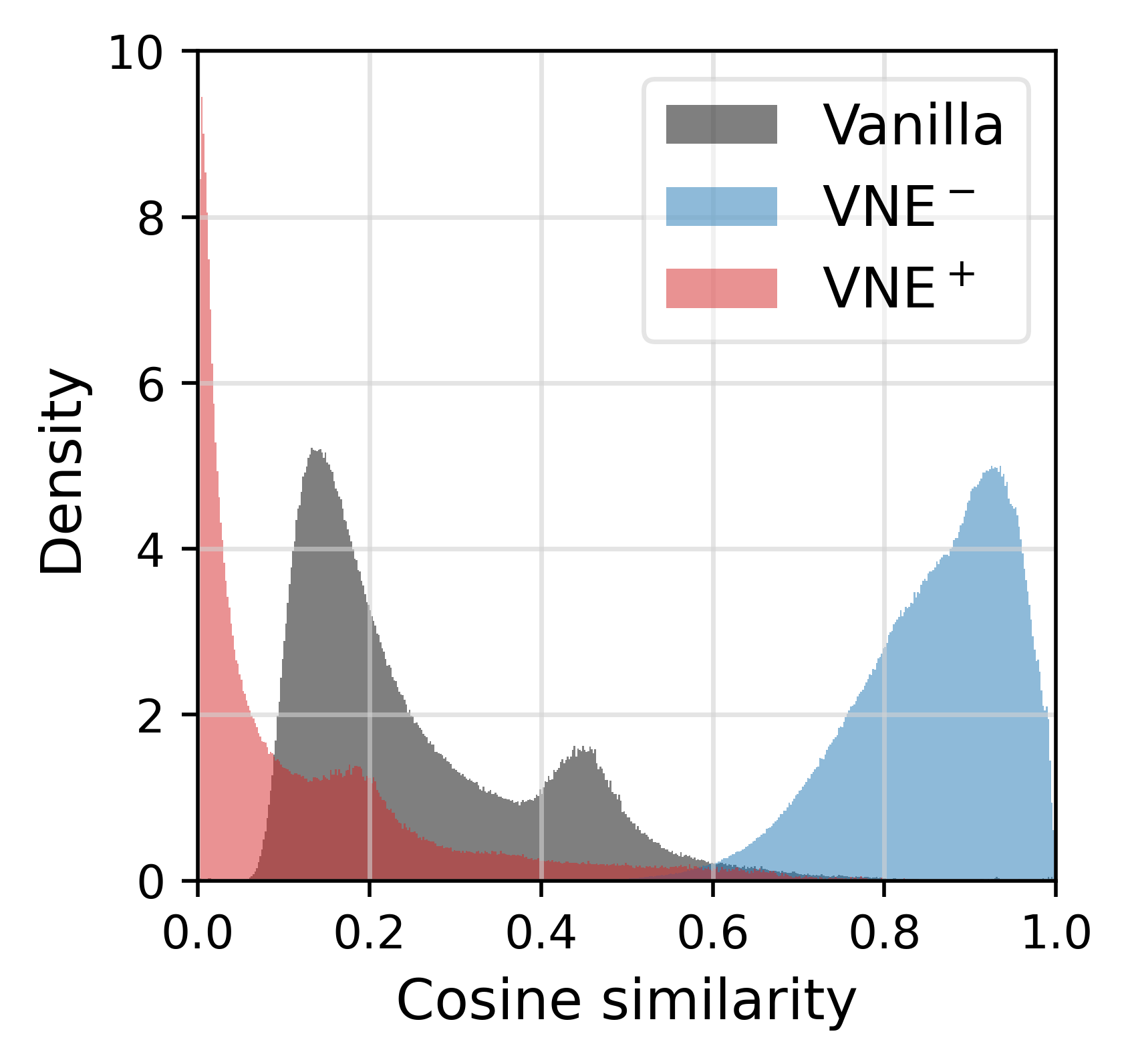}
}
\vspace{-0.2cm}
\caption{\label{figure:analysis_sim_units}Disentanglement of representation: Distribution of cosine similarity between pairwise components in representation. Note that all the values are positive because of the ReLU layer.}
\vspace{-0.4cm}
\end{figure}

\subsection{Isotropy of Representation}
The autocorrelation of representation is defined as $\mathcal{C}_{\text{auto}} = \bm{H}^T\bm{H}/N \in \mathbb{R}^{d \times d}$ where $d$ is the representation vector's size. In contrast, isotropy concerns $\bm{H}\bm{H}^T \in \mathbb{R}^{N \times N}$ because it handles the uniformity in all orientations for the $N$ representation vectors in the $d$-dimensional vector space. Similar to the rank and disentanglement, we first provide a theoretical result. 

\begin{theorem}[Isotropy and VNE]
\label{thm:isotropy}
For a given representation matrix $\bm{H}\in \mathbb{R}^{N \times d}$,
suppose that $N \le d$ and $S(\mathcal{C}_{\text{auto}})$ is maximized. Then,
\vspace{-0.4cm}
\begin{equation}
\bm{H}\bm{H}^T = \bm{I}_N.
\end{equation}
\end{theorem}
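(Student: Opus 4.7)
The plan is to combine a Gibbs--type entropy maximization argument with an SVD of $\bm{H}$. First, I would observe that the $L_2$ normalization $\|\bm{h}_i\|_2=1$ yields $\mathrm{tr}(\mathcal{C}_{\text{auto}}) = \frac{1}{N}\sum_{i=1}^N \|\bm{h}_i\|_2^2 = 1$, so the eigenvalues $\{\lambda_j\}_{j=1}^d$ of $\mathcal{C}_{\text{auto}}$ are nonnegative and sum to one, i.e.\ they form a probability distribution on $d$ atoms. Combined with $\mathrm{rank}(\bm{H}^T\bm{H}) \le \min(N,d) = N$, at most $N$ of the $\lambda_j$ can be nonzero.

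Next, I would invoke the same Jensen-type argument underlying Lemma~\ref{lem:entropy}: among nonnegative sequences summing to one with support of size at most $N$, Shannon entropy is maximized exactly by the uniform distribution on $N$ atoms, giving value $\log N$. Hence if $S(\mathcal{C}_{\text{auto}})$ attains its maximum under the problem's constraints, then $\mathcal{C}_{\text{auto}}$ has exactly $N$ nonzero eigenvalues, each equal to $1/N$.

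Finally, I would translate this spectral condition back to $\bm{H}\bm{H}^T$ via SVD. Writing $\bm{H} = U\Sigma V^T$ with $U\in\mathbb{R}^{N\times N}$ orthogonal, $V\in\mathbb{R}^{d\times d}$ orthogonal, and singular values $\sigma_1,\ldots,\sigma_N\ge 0$, the nonzero eigenvalues of $\mathcal{C}_{\text{auto}} = V(\Sigma^T\Sigma/N)V^T$ coincide with $\sigma_i^2/N$. The previous paragraph forces $\sigma_i^2/N = 1/N$, i.e.\ $\sigma_i = 1$ for every $i=1,\ldots,N$. Therefore $\Sigma\Sigma^T = I_N$, and
\begin{equation}
\bm{H}\bm{H}^T = U\Sigma\Sigma^T U^T = U I_N U^T = I_N,
\end{equation}
which is the claim.

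The main obstacle is bookkeeping rather than any deep step: one must be careful that ``$S(\mathcal{C}_{\text{auto}})$ is maximized'' refers to the maximum achievable under the given constraints, which is $\log N$ (not the unconditional bound $\log d$), because the rank cap $N \le d$ is precisely what prevents the distribution from spreading across all $d$ eigen-directions. Once that point is made explicit, the rest is a direct Gibbs / SVD calculation.
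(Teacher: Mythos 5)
Your proof is correct and follows essentially the same route as the paper's: invoke the entropy bound (Lemma~\ref{lem:entropy}) together with the rank cap $\mathrm{rank}(\mathcal{C}_{\text{auto}})\le N$ to force the eigenvalues to be $1/N$ on $N$ atoms and zero elsewhere, then pass through the SVD $\bm{H}=\bm{U}\bm{\Sigma}\bm{V}^T$ to get $\bm{\Sigma}\bm{\Sigma}^T=\bm{I}_N$ and hence $\bm{H}\bm{H}^T=\bm{I}_N$. Your explicit remark that ``maximized'' means the constrained maximum $\log N$ rather than $\log d$ is a point the paper leaves implicit, but it is the same argument.
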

Refer to Supplementary~\ref{sec:prop_proof} for the proof.
Theorem~\ref{thm:isotropy} states that if $S(\mathcal{C}_{\text{auto}})$ is maximized, representation vectors are uniformly distributed in all orientations and thus isotropic~\cite{arora2015latent}.
To perform an empirical analysis, we follow the studies of \cite{arora2015latent,mu2017all} and adopt the partition function $Z(\bm{c})=\sum_{i=1}^N\text{exp}(\bm{c}^T\bm{h}_i)$ defined for an arbitrary unit column vector $\bm{c}$. The partition function becomes constant when $\{\bm{h}_1, \cdots, \bm{h}_i, \cdots, \bm{h}_N\}$ are isotropically distributed. To be specific, the normalized partition function, $\frac{Z(\bm{c})}{\max_{||\bm{c}||=1}Z(\bm{c})}$, should become approximately 1 when the representation is isotropic (Lemma 2.1 in~\cite{arora2015latent}). We have analyzed the normalized partition function for meta-learning and supervised learning, and the obtained results are presented in Figure~\ref{figure:analysis_sim_reps}. In both cases, it can be observed that isotropy is strengthened by VNE$^+$ and weakened by VNE$^-$.
For domain generalization, the trend is the same, but the shift in the distribution turns out to be relatively limited (see Figure~\ref{figure:supple_iso} in Supplementary~\ref{sec:supplementary_results}). Based on the theoretical and empirical results, we can infer that the von Neumann entropy can be utilized as a proxy for controlling the representation's isotropy.

\begin{figure}[t!]
\centering
\subfloat[Meta-learning]{
\includegraphics[width=0.45\columnwidth]{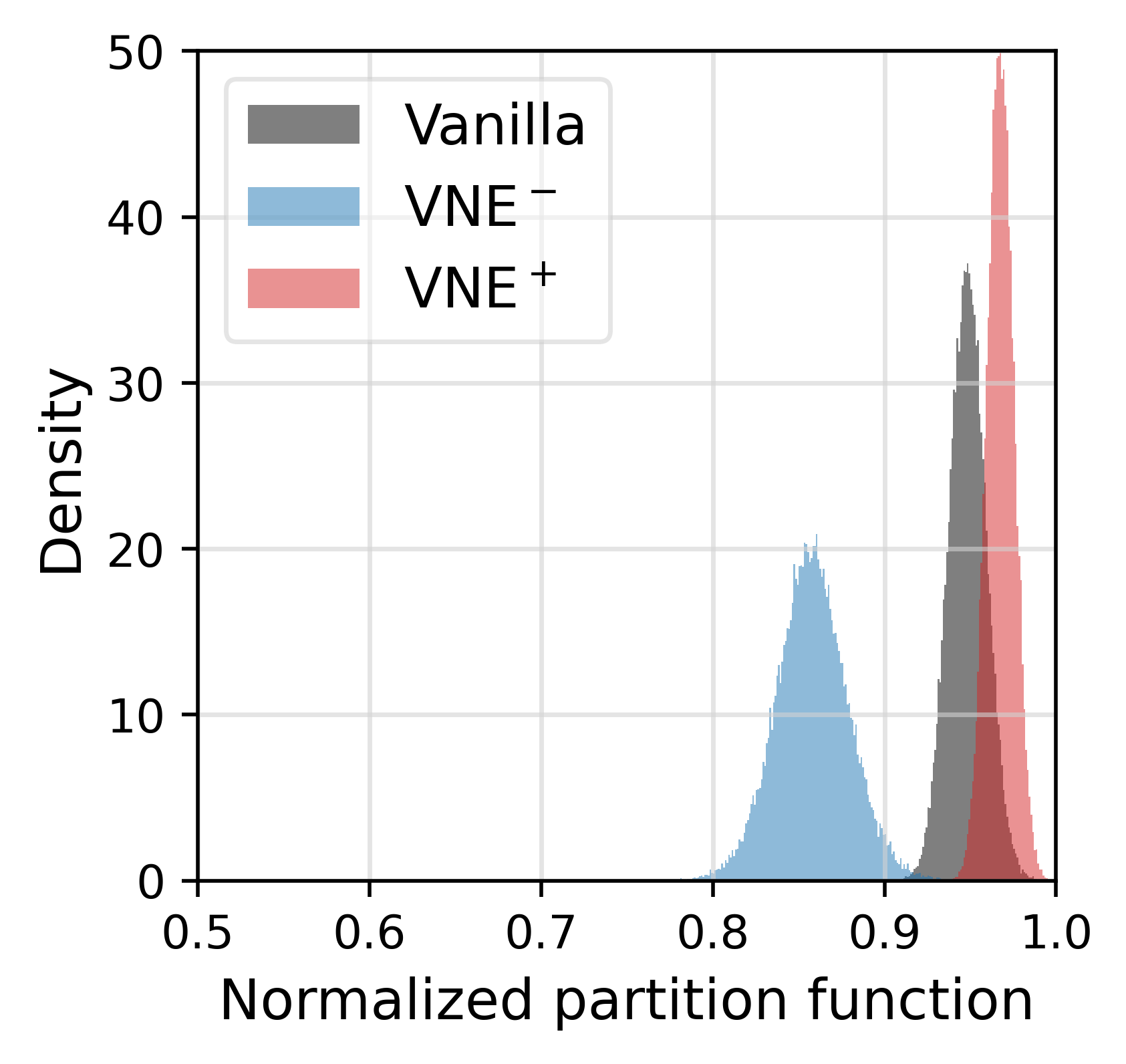}
}
\subfloat[Supervised (ImageNet-100)]{
\includegraphics[width=0.45\columnwidth]{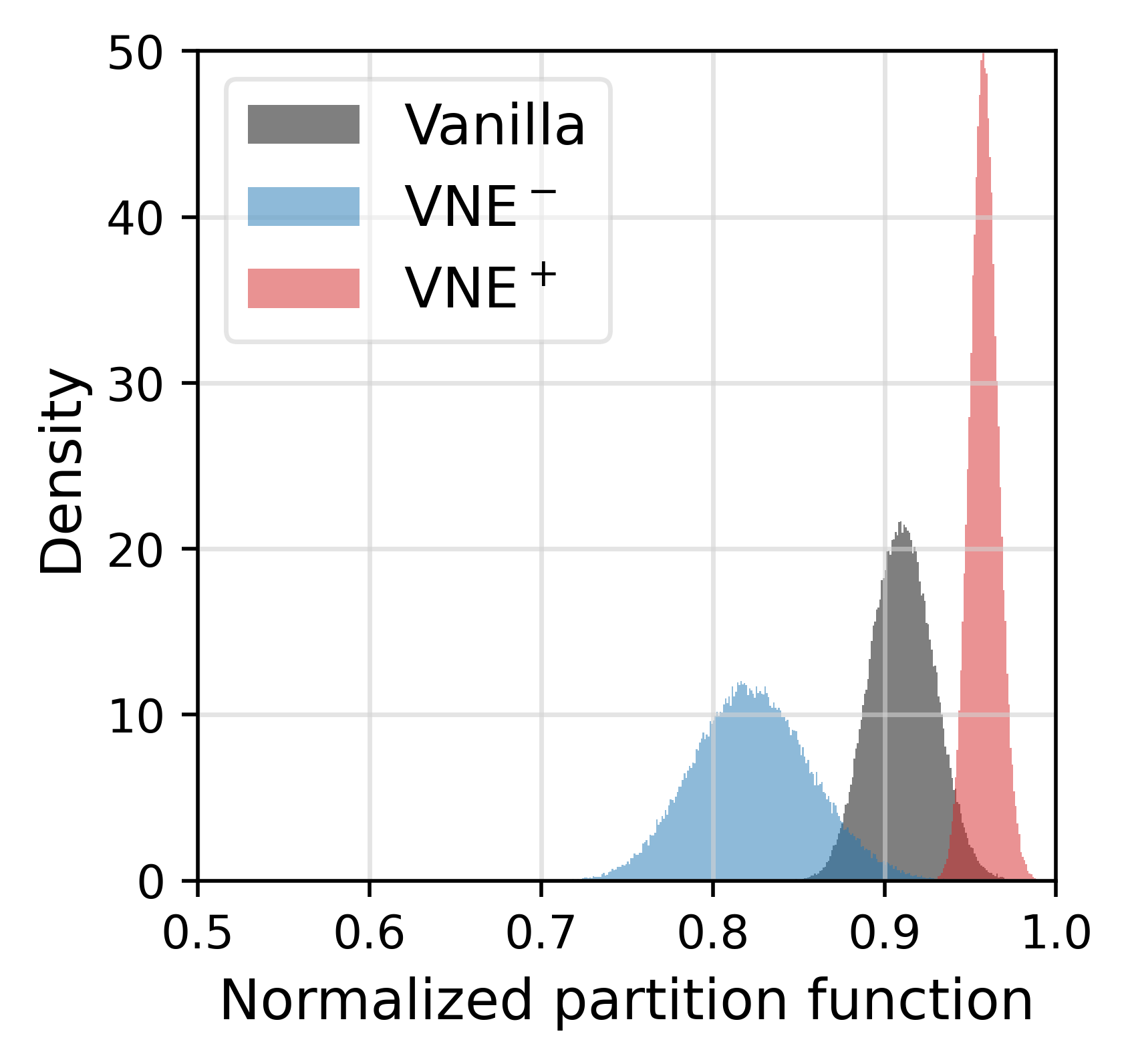}
}
\vspace{-0.2cm}
\caption{\label{figure:analysis_sim_reps}Isotropy of representation:
Distribution of the normalized partition function values.
Representation with its distribution closer to one is more isotropic.
}
\vspace{-0.45cm}
\end{figure}

\vspace{-0.1cm}
\section{Discussion}
\label{sec:discussion}
\vspace{-0.05cm}

\subsection{VNE and Dimension}
\label{sec:discussion_dg}
Although a large amount of information can be contained in a representation,
it is known that the \textit{usable information} is intimately linked to the predictive models that have computational constraints~\cite{xu2020theory,dubois2020learning}.
For instance, the representation decodability can be a critical factor when performing a linear evaluation~\cite{alain2016understanding}. From this perspective, decorrelation, disentanglement, whitening, and isotropy can be understood as improving decodability by encouraging a representation to use as many dimensions as possible with a full utilization of each dimension. Von Neumann entropy can be understood in the same way, except that its mathematical formulation is superior as explained in Section~\ref{sec:vonneumannentropy}. 

In this context, it looks logical that VNE$^+$ is beneficial in improving the performance of meta-learning, SSL, and GAN. However, for domain generalization, VNE$^+$ is harmful and VNE$^-$ is helpful. DG differs from the other tasks because the model needs to be ready for the same label-set but unseen target domains. Fine-tuning to the target domain is not allowed, either. In this case, the model needs to be trained to be solely dependent on the \textit{invariant features} and not on the \textit{spurious features}~\cite{arjovsky2019invariant, ahuja2021invariance, aubin2021linear, krueger2021out}. Because it is important to discard spurious features in DG, it makes sense that VNE$^-$ can be beneficial in reducing the number of dimensions and thus reducing the amount of usable information. However, if a very strong VNE$^-$ is applied, it can be harmful because even invariant features can be discarded.

\subsection{Von Neumann Entropy vs. Shannon Entropy}
Von Neumann entropy is defined over the representation autocorrelation $\mathcal{C}_{\text{auto}}$. For the representation $\bm{h}$ itself, Shannon entropy can be defined and it is relevant because it is also a metric of entropy. In fact, it can be proven that von Neumann entropy is a lower bound of Shannon entropy~\cite{nielsen2002quantum}. 

Owing to the connection, we have investigated if Shannon Entropy~(SE) can replace von Neumann entropy and achieve a better performance. Unlike VNE, however, regularizing Shannon metric is known to be difficult~\cite{kraskov2004estimating,gao2015efficient} and its implementation can be challenging. In our investigation, we have focused on the fact that Shannon entropy is equivalent to Shannon self-information (i.e., $H(\bm{h})= I(\bm{h};\bm{h})$~\cite{cover1999elements}) and that self-information can be evaluated using the latest variational mutual information estimators. In particular, we have chosen InfoNCE~\cite{oord2018representation,poole2019variational} as the mutual information estimator and regularized the Shannon entropy. An exemplary result for domain generalization is presented in Table~\ref{tab:shannon_entropy_dg} of Supplementary~\ref{sec:supplementary_results}. From the result, it can be observed that Shannon entropy can also improve the performance of ERM and SWAD. However, the overall improvement is smaller where the average improvements are 1.43\% and 0.79\% for VNE and SE, respectively. We have performed a similar comparison for SSL and reached the same conclusion. Although Shannon entropy is closely related to von Neumann entropy, the difficulty in manipulating Shannon entropy appears to make it less useful.  

\vspace{-0.05cm}
\section{Conclusion}
\label{sec:conclusion}
\vspace{-0.05cm}

In this study, we have proposed von Neumann entropy for manipulating the eigenvalue distribution of the representation's autocorrelation matrix $\mathcal{C}_{\text{auto}}$. We have shown why its mathematical formulation can be advantageous when compared to the conventional approach of Frobenius norm. Then, we have demonstrated von Neumann entropy's general applicability by empirically investigating four major learning tasks: DG, meta-learning, SSL, and GAN. Finally, we have established von Neumann entropy's theoretical connection with the conventional properties of rank, disentanglement, and isotropy. Overall, we conclude that von Neumann entropy is an effective and useful representation property for improving task performance.

\section*{Acknowledgements}
This work was supported by the following grants funded by the Korea government: NRF-2020R1A2C2007139, NRF-2022R1A6A1A03063039, and [NO.2021-0-01343, Artificial Intelligence Graduate School Program (Seoul National University)].

{\small
\bibliographystyle{ieee_fullname}
\bibliography{reference}
}

\clearpage
\appendix

\twocolumn[{
 \centering
 \Large Supplementary materials for the paper \\ ``VNE: An Effective Method for Improving Deep Representation \\ by Manipulating Eigenvalue Distribution''\\[1em]
}]

\section{A Brief Introduction to Quantum Theory}
\label{sec:quantum_prelim}
A classic bit can be either 0 or 1. In quantum theory~\cite{nielsen2002quantum, wilde2013quantum}, a \textit{qubit} is a quantum extension of the classic bit, and it can be in state $\ket{0}$, state $\ket{1}$, or any linear combination (superposition state) of the two as $\ket{\psi} = a\ket{0} + b\ket{1}$, where $|a|^2+|b|^2=1$.
\vspace{-0.2cm}
\paragraph{Dirac notation and basic concepts:}
Dirac notation is used in quantum theory~\cite{dirac1939new}. For a state $\ket{\psi}$, $\psi$ should be understood as the name or label of the state. Because linear algebra provides the mathematical foundation of quantum theory, vector notation is adopted. For instance, in the simple example of  $\ket{\psi}= a\ket{0} + b\ket{1}$, $\ket{\psi}$ can be expressed as $\ket{\psi} = [a, b]^T$ where the interpretation should be state $\ket{\psi}$ can be 0 with probability $|a|^2$ and 1 with probability $|b|^2$ (therefore $|a|^2+|b|^2=1$). Here, the \textit{ket} vector $\ket{\psi}$ is the Dirac notation for a column vector in a Hilbert space $\mathcal{H}$. To represent a row vector, the \textit{bra} vector $\bra{\psi}$ is used, as in $\bra{\psi}=[a, b]$. An inner product or \textit{braket} is represented as $\braket{\psi}{\phi}$ and an outer product or \textit{ketbra} is represented as $\ketbra{\psi}{\phi}$.

A \textit{composite quantum state} of $n$ qubits can be represented as a vector of size $2^n$ (e.g., a single-qubit state is represented as a vector of size two). For example, a quantum state of two separable single-qubit states can be represented as
\vspace{-0.1cm}
\begin{equation}
\begin{split}
\ket{\psi}\otimes\ket{\phi} &= \ket{\psi}\ket{\phi} = \ket{\psi\phi}\\
&= [a,b]^T\otimes[c,d]^T = [ac,ad,bc,bd]^T
\end{split}
\vspace{-0.1cm}
\end{equation}
in which $\abs{ac}^2,\abs{ad}^2,\abs{bc}^2,$ and $\abs{bd}^2$ represent the probability of $\ket{\psi\phi}$ being $\ket{00}, \ket{01}, \ket{10}$, and $\ket{11}$, respectively.
In $d$-dimensional quantum system, a quantum state is on the unit hypersphere in a Hilbert space $\mathcal{H}$.

A state can be either \textit{pure} or \textit{mixed}. In the simple example, $\ket{0}=[1, 0]^T$ and $\ket{1}=[0, 1]^T$ form the \textit{computational basis states}, and they are pure states. Any superposition of the two, $\ket{\psi}=a\ket{0} + b\ket{1}$, is also a pure state because it corresponds to a single vector with a probabilistic distribution over the basis states. By contrast, a mixed state is a probabilistic mixture of a set of pure states. Note that a pure state already has a probabilistic interpretation over the basis states and a mixed state has an additional level of probabilistic interpretation over a set of such pure states. 
In this case, we are considering a state that is not completely known but is an ensemble of pure states $\{\ket{\psi_{i}}\}$ with respective probabilities $\{p_i\}$.
The full information of a mixed state cannot be represented as a vector, and the notion of the density operator (also called density matrix) is required.
\begin{definition}[Density operator~\cite{nielsen2002quantum}]
\label{def:density}
A density operator is defined as below.
\vspace{-0.2cm}
\begin{equation}
\rho\triangleq \sum_{i}p_i\ketbra{\psi_{i}}{\psi_{i}}.
\vspace{-0.2cm}
\end{equation}
\end{definition}
Density operator $\rho$ satisfies $\rho \ge 0$ and $tr(\rho)=1$. In addition, $\rho = \rho^2$ and $rank(\rho) = 1$ are satisfied for pure states and $tr(\rho^2)<1$ is satisfied for mixed states.
The density operator provides a convenient way to describe the uncertainty or probability distribution of a quantum system. According to Gleason's theorem~\cite{gleason1957measures}, the probability of a state $\ket{\psi_i}$ in the system with $\rho$ is given by $tr(\rho\ketbra{\psi_i}{\psi_i})$.

While quantum theory encompasses a broad scope of subjects, quantum information theory or quantum Shannon theory is a sub-field that focuses on the quantum equivalent of Shannon information theory~\cite{wilde2013quantum}.
Among the extensive results, we utilize the basic concepts of \textit{von Neumann entropy} (also called quantum entropy).
While Shannon entropy is calculated for a classical probability distribution, von Neumann entropy is calculated for a density operator $\rho$~\cite{nielsen2002quantum}, a positive semi-definite hermitian matrix in a Hilbert space $\mathcal{H}$ with the trace value of one.
Similar to Shannon information theory, it measures the uncertainty associated with a quantum system.
\begin{definition}[von Neumann entropy~\cite{nielsen2002quantum}]
\label{def:entropy}
The von Neumann entropy (quantum entropy) of a quantum state with density operator $\rho$ is defined as
\vspace{-0.2cm}
\begin{equation}
S(\rho)\triangleq-tr(\rho\log{\rho})=-\sum_j \lambda_j\log{\lambda_j},
\vspace{-0.2cm}
\end{equation}
where \{$\lambda_j$\} are the eigenvalues of $\rho$.
\end{definition}

\section{Proofs of Theorems}
\label{sec:prop_proof}
\setcounter{theorem}{0}
\setcounter{definition}{0}
\setcounter{assumption}{0}

\begin{lemma}
\label{lem:entropy}
For given $p_{i}\ge0$ and $\sum_{i=1}^n p_{i}=1$, the entropy function $H(p_{1},...,p_{n})= -\sum_{i=1}^n p_{i} \log{p_{i}}$ is strictly concave and is upper-bounded by $\log{n}$ as follows,
\vspace{-0.2cm}
\begin{equation}
\label{eq:property_entropy}
\log{n} = H(1/n,...,1/n) \ge H(p_{1},...,p_{n}) \ge 0.
\vspace{-0.2cm}
\end{equation}
\end{lemma}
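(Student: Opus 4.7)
The plan is to prove the three assertions (strict concavity, the upper bound $\log n$, and the lower bound $0$) separately, since each follows from standard tools once the right reformulation is chosen.

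For strict concavity, I would reduce to a single-variable claim by noting that $H(p_1, \dots, p_n) = \sum_{i=1}^n f(p_i)$ for $f(x) \triangleq -x \log x$ (with the convention $f(0) = 0$). A routine differentiation gives $f''(x) = -1/x < 0$ on $(0,1]$, so $f$ is strictly concave there; since a sum of strictly concave functions over a convex domain (the probability simplex) is strictly concave, the claim follows. The only minor care needed is at the boundary where some $p_i = 0$, which is handled by the standard convention $0 \log 0 = 0$ and continuity of $f$ at $0$.

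For the lower bound $H(p_1, \dots, p_n) \ge 0$, I would argue termwise: for $p_i \in [0,1]$ we have $\log p_i \le 0$, hence $-p_i \log p_i \ge 0$, and summing preserves the inequality. Equality characterization (if desired) occurs precisely when every $p_i \in \{0,1\}$, i.e., the distribution is a point mass.

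For the upper bound $H(p_1, \dots, p_n) \le \log n$, the cleanest route is via Gibbs' inequality (nonnegativity of KL divergence) against the uniform distribution $u_i = 1/n$. Writing
\begin{equation}
0 \le \sum_{i=1}^n p_i \log \frac{p_i}{1/n} = -H(p_1,\dots,p_n) + \log n,
\end{equation}
and rearranging yields $H(p_1, \dots, p_n) \le \log n$, with equality iff $p_i = 1/n$ for all $i$. If preferred, the same bound follows directly from Jensen's inequality applied to the concavity established in the first step: $\frac{1}{n}\sum_i f(p_i) \le f\!\left(\frac{1}{n}\sum_i p_i\right) = f(1/n)$, and multiplying by $n$ gives $H \le \log n$. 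The main subtlety worth flagging is just the boundary handling at $p_i = 0$ for concavity and for the logarithm in Gibbs' inequality, but both are resolved by the standard convention $0\log 0 = 0$ and $0 \log(0/q) = 0$, so no real obstacle remains.
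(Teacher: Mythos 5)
Your proof is correct, but it takes a genuinely different route from the paper: the paper does not prove the lemma at all, it simply defers to Section D.1 of Marshall and Olkin's monograph on majorization (the reference cited as \cite{marshall1979inequalities}), where the bound $H(p)\le\log n$ is obtained from the Schur-concavity of entropy together with the fact that the uniform distribution is majorized by every distribution on $n$ points. You instead give a self-contained elementary argument: strict concavity from $f(x)=-x\log x$, $f''(x)=-1/x<0$, summed coordinatewise over the simplex; the lower bound termwise from $-p_i\log p_i\ge 0$; and the upper bound either from Gibbs' inequality against the uniform distribution or from Jensen applied to $f$. Both routes are sound; the paper's buys brevity and situates the lemma in the majorization framework (which also yields finer monotonicity statements for free), while yours buys transparency and needs nothing beyond calculus and Jensen, and it additionally records the equality cases explicitly. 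Two small points worth tightening if you write it out: the claim that a sum of strictly concave scalar functions of distinct coordinates is strictly concave deserves the one-line justification that any two distinct points of the simplex differ in at least one coordinate, and strict concavity of $f$ on the closed interval $[0,1]$ (not just $(0,1]$) should be argued, e.g.\ by noting that equality in the concavity inequality for a chord from $0$ would force $f$ to be affine on that chord, contradicting strict concavity on its interior; the convention $0\log 0=0$ alone is not quite an argument.
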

\begin{proof}
Refer to Section D.1 in~\cite{marshall1979inequalities}.
\end{proof}

\begin{lemma}
\label{lem:kld_gaussian}
The KL Divergence for two zero-mean $d$-dimensional multivariate Gaussian distributions can be derived as follows,
\vspace{-0.2cm}
\begin{equation}
\begin{split}
&D_{\mathrm{KL}}( \mathcal{N} (0, \bm{\Sigma}_1) \Vert \mathcal{N} (0, \bm{\Sigma}_2) )\\
&= \frac{1}{2}\left[\text{tr}(\bm{\Sigma}_2^{-1}\bm{\Sigma}_1)-d +\log\frac{|\bm{\Sigma}_2|}{|\bm{\Sigma}_1|} \right].
\vspace{-0.2cm}
\end{split}
\end{equation}
\end{lemma}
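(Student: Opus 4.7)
The plan is to compute the KL divergence directly from its definition, $D_{\mathrm{KL}}(p \Vert q) = \mathbb{E}_{\bm{x} \sim p}[\log p(\bm{x}) - \log q(\bm{x})]$, by substituting the two zero-mean multivariate Gaussian densities and simplifying. Writing $p(\bm{x}) = (2\pi)^{-d/2} |\bm{\Sigma}_1|^{-1/2}\exp(-\tfrac{1}{2}\bm{x}^T \bm{\Sigma}_1^{-1} \bm{x})$ and analogously for $q$, the log difference is
$$
\log p(\bm{x}) - \log q(\bm{x}) = \tfrac{1}{2}\log\tfrac{|\bm{\Sigma}_2|}{|\bm{\Sigma}_1|} + \tfrac{1}{2}\bigl(\bm{x}^T \bm{\Sigma}_2^{-1}\bm{x} - \bm{x}^T \bm{\Sigma}_1^{-1}\bm{x}\bigr),
$$
since the normalization constants $(2\pi)^{-d/2}$ cancel.

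Next I would take the expectation of this expression under $\bm{x} \sim \mathcal{N}(0, \bm{\Sigma}_1)$. The deterministic log-determinant term passes through, and each quadratic form is handled by the standard identity $\mathbb{E}_{\bm{x} \sim \mathcal{N}(0, \bm{\Sigma}_1)}[\bm{x}^T \bm{A} \bm{x}] = \mathrm{tr}(\bm{A}\bm{\Sigma}_1)$, which follows from rewriting $\bm{x}^T \bm{A} \bm{x} = \mathrm{tr}(\bm{A}\bm{x}\bm{x}^T)$, using linearity of trace and expectation, and noting that $\mathbb{E}[\bm{x}\bm{x}^T] = \bm{\Sigma}_1$. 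Applying this gives $\mathbb{E}[\bm{x}^T \bm{\Sigma}_2^{-1}\bm{x}] = \mathrm{tr}(\bm{\Sigma}_2^{-1}\bm{\Sigma}_1)$ and $\mathbb{E}[\bm{x}^T \bm{\Sigma}_1^{-1}\bm{x}] = \mathrm{tr}(\bm{\Sigma}_1^{-1}\bm{\Sigma}_1) = \mathrm{tr}(\bm{I}_d) = d$.

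Assembling the pieces yields the claimed identity
$$
D_{\mathrm{KL}}(\mathcal{N}(0,\bm{\Sigma}_1) \Vert \mathcal{N}(0,\bm{\Sigma}_2)) = \tfrac{1}{2}\bigl[\mathrm{tr}(\bm{\Sigma}_2^{-1}\bm{\Sigma}_1) - d + \log\tfrac{|\bm{\Sigma}_2|}{|\bm{\Sigma}_1|}\bigr].
$$
There is no real obstacle here: the derivation is a standard textbook computation, and the only step that requires any care is the justification of the trace identity for Gaussian quadratic forms. Invertibility of $\bm{\Sigma}_1$ and $\bm{\Sigma}_2$ is implicit (the Gaussian densities would otherwise be degenerate), so no additional assumptions need to be introduced.
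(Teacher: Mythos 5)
Your derivation is correct and complete: the log-density difference, the identity $\mathbb{E}[\bm{x}^T\bm{A}\bm{x}] = \mathrm{tr}(\bm{A}\bm{\Sigma}_1)$, and the final assembly all check out. The paper does not prove this lemma itself but merely cites a standard reference (Duchi's derivations notes), which carries out essentially the same textbook computation you give, so your approach matches the intended one.
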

\begin{proof}
Refer to Section 9 in~\cite{duchi2007derivations}.
\end{proof}

\begin{theorem}[Rank and VNE]
For a given representation autocorrelation $\mathcal{C}_{\text{auto}} = \bm{H}^T\bm{H}/N \in \mathbb{R}^{d \times d}$ of rank $k$ $(\le d)$,
\vspace{-0.1cm}
\begin{equation}
\text{log}(\text{rank}(\mathcal{C}_{\text{auto}})) \ge S(\mathcal{C}_{\text{auto}}),
\end{equation}
where equality holds iff the eigenvalues of $\mathcal{C}_{\text{auto}}$ are uniformly distributed with $\forall_{j=1}^k \lambda_{j}=1/k$ and $\forall_{j=k+1}^d \lambda_{j}=0$.
\end{theorem}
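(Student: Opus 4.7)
The plan is to reduce this to the classical entropy bound in Lemma~\ref{lem:entropy}. Since $\mathcal{C}_{\text{auto}} = \bm{H}^T\bm{H}/N$ is symmetric positive semidefinite with trace one (each $\bm{h}_i$ is unit-norm, so $\sum_j \lambda_j = \operatorname{tr}(\mathcal{C}_{\text{auto}}) = 1$), its eigenvalues $\{\lambda_j\}_{j=1}^d$ form a valid probability distribution on $\{1,\dots,d\}$. The rank being $k$ means exactly $k$ of these eigenvalues are strictly positive and $d-k$ are zero.

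First I would isolate the nonzero spectrum. Reorder so that $\lambda_1,\dots,\lambda_k > 0$ and $\lambda_{k+1} = \cdots = \lambda_d = 0$. Using the convention $0\log 0 = 0$ (which is the standard convention used to define $S(\mathcal{C}_{\text{auto}})$ in Eq.~(\ref{eq:vne_formula})), the entropy collapses to a sum over only the nonzero eigenvalues:
\begin{equation*}
S(\mathcal{C}_{\text{auto}}) = -\sum_{j=1}^d \lambda_j \log \lambda_j = -\sum_{j=1}^k \lambda_j \log \lambda_j.
\end{equation*}
Since $(\lambda_1,\dots,\lambda_k)$ is a probability distribution on $k$ atoms (nonnegative, summing to one), Lemma~\ref{lem:entropy} applied with $n=k$ yields
\begin{equation*}
S(\mathcal{C}_{\text{auto}}) = H(\lambda_1,\dots,\lambda_k) \le \log k = \log(\operatorname{rank}(\mathcal{C}_{\text{auto}})),
\end{equation*}
which is the desired inequality.

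For the equality characterization, I would invoke the strict concavity statement in Lemma~\ref{lem:entropy}: the bound $H \le \log k$ is attained if and only if the distribution over the $k$ atoms is uniform, i.e., $\lambda_j = 1/k$ for $j=1,\dots,k$. Combined with the rank assumption $\lambda_{k+1} = \cdots = \lambda_d = 0$, this gives exactly the eigenvalue profile stated in the theorem. Conversely, if the eigenvalues have this uniform-on-support form, direct substitution gives $S(\mathcal{C}_{\text{auto}}) = -k \cdot (1/k)\log(1/k) = \log k$.

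I do not expect any serious obstacle here; the main thing to be careful about is the $0\log 0$ convention, so that the $d-k$ zero eigenvalues genuinely drop out of the sum and the reduction to the $k$-atom Shannon entropy is rigorous rather than merely formal. Everything else is an immediate application of Lemma~\ref{lem:entropy} together with the fact that $\operatorname{tr}(\mathcal{C}_{\text{auto}})=1$ makes the spectrum a probability vector.
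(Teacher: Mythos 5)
Your proposal is correct and follows essentially the same route as the paper's proof: restrict to the $k$ nonzero eigenvalues (which sum to one since $\operatorname{tr}(\mathcal{C}_{\text{auto}})=1$), apply Lemma~\ref{lem:entropy} with $n=k$, and use the $0\log 0 = 0$ convention to identify the $k$-term sum with $S(\mathcal{C}_{\text{auto}})$, with the equality case coming from the uniformity condition in the lemma. No gaps; the only difference is that you spell out the reordering and the probability-vector observation slightly more explicitly than the paper does.
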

\begin{proof}
\begin{align}
\text{log}(\text{rank}(\mathcal{C}_{\text{auto}})) &= \text{log}(k)\\[-2pt]
\label{eq:inequlity_rank}
&\ge H(\lambda_{1},...,\lambda_{k}) \text{ (by Lemma~\ref{lem:entropy})}\\[-2pt]
&=-\sum_{j=1}^k \lambda_{j} \log{\lambda_{j}}\\[-2pt]
\label{eq:equlity_rank}
&=-\sum_{j=1}^d \lambda_{j} \log{\lambda_{j}}\\[-2pt]
&=S(\mathcal{C}_{\text{auto}}).
\end{align}
By Lemma~\ref{lem:entropy}, the inequality~\eqref{eq:inequlity_rank} holds with equality if and only if $\forall_{j=1}^k \lambda_{j}=1/k$.
The Eq.~\eqref{eq:equlity_rank} follows from the convention $0\log{0} = 0$~\cite{cover1999elements}. 
\end{proof}

\begin{assumption}
\label{assumption:zero_mean_gaussian_supple}
We assume that representation $\bm{h}$ follows zero-mean multivariate Gaussian distribution.
In addition, we assume that the components of $\bm{h}$ (denoted as $\bm{h}^{(i)}$) have homogeneous variance of $\frac{1}{d}$, i.e., $\forall_{i=1}^d\bm{h}^{(i)} \sim \mathcal{N} (0, \frac{1}{d})$.
\end{assumption}

\begin{theorem}[Disentanglement and VNE]
Under the Assumption~\ref{assumption:zero_mean_gaussian_supple}, $\bm{h}$ is disentangled if $S(\mathcal{C}_{\text{auto}})$ is maximized.
\end{theorem}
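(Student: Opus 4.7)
The plan is to reduce the theorem to two facts: maximizing VNE pins down $\mathcal{C}_{\text{auto}}$ to a scalar multiple of the identity, and for a zero-mean multivariate Gaussian a scalar-identity covariance is exactly the statement that the components are independent. The first reduction is where the real content lies; the second is then a standard Gaussian fact.

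First I would invoke Lemma~\ref{lem:entropy} applied to the eigenvalues $\{\lambda_j\}$ of $\mathcal{C}_{\text{auto}}$, which as noted in Section~\ref{sec:vonneumannentropy} satisfy $\lambda_j \ge 0$ and $\sum_j \lambda_j = 1$. The lemma gives $S(\mathcal{C}_{\text{auto}}) \le \log d$ with equality if and only if $\lambda_j = 1/d$ for every $j$. Hence maximizing $S(\mathcal{C}_{\text{auto}})$ forces every eigenvalue of $\mathcal{C}_{\text{auto}}$ to equal $1/d$. Since $\mathcal{C}_{\text{auto}}$ is real symmetric and therefore orthogonally diagonalizable, uniform eigenvalues are equivalent to $\mathcal{C}_{\text{auto}} = \tfrac{1}{d} I_d$.

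Next, under Assumption~\ref{assumption:zero_mean_gaussian_supple} the representation $\bm{h}$ is zero-mean Gaussian, so its population autocorrelation coincides with its covariance matrix $\bm{\Sigma}$. Identifying $\mathcal{C}_{\text{auto}}$ with $\bm{\Sigma}$, the previous step yields $\bm{\Sigma} = \tfrac{1}{d} I_d$, which is diagonal. For a jointly Gaussian vector, a diagonal covariance is equivalent to statistical independence of the components, so $\bm{h}^{(1)},\dots,\bm{h}^{(d)}$ are mutually independent, which is exactly disentanglement in the sense adopted from~\cite{achille2018emergence}.

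As a more quantitative alternative that makes explicit use of Lemma~\ref{lem:kld_gaussian}, I could instead compute the KL divergence from $\mathcal{N}(0, \bm{\Sigma})$ to the canonical fully-disentangled reference $\mathcal{N}(0, \tfrac{1}{d} I_d)$. Using $\text{tr}(\bm{\Sigma}) = \sum_j \lambda_j = 1$, the formula collapses to $\tfrac{1}{2}\bigl[-d\log d - \sum_j \log \lambda_j\bigr]$, which by AM--GM on $\{\lambda_j\}$ is nonnegative and vanishes precisely when every $\lambda_j = 1/d$. A zero KL divergence then gives the conclusion. The only delicate point in either route is being careful about the meaning of ``maximized'': one must read it as $S(\mathcal{C}_{\text{auto}}) = \log d$, so that the equality case of Lemma~\ref{lem:entropy} (or AM--GM) can be invoked to pin down every eigenvalue rather than merely to bound them; with that taken care of, the remainder of the argument is routine.
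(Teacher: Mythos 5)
Your proposal is correct, and your primary route is genuinely different from the paper's. The paper proves the theorem by computing the total correlation $TC(\bm{h})$ of \cite{achille2018emergence}: it introduces the reference variable $\bm{h}' \sim \mathcal{N}(0, \tfrac{1}{d}I_d)$, notes that under Assumption~\ref{assumption:zero_mean_gaussian_supple} the product of marginals of $\bm{h}$ equals $p(\bm{h}')$, and then applies the Gaussian KL formula (Lemma~\ref{lem:kld_gaussian}) with the maximizing eigenvalues $\lambda_j = 1/d$ substituted directly, obtaining $TC(\bm{h}) = 0$ and hence independence. Your ``quantitative alternative'' is essentially this same argument, done slightly more generally by keeping $\abs{\bm{\Sigma}}$ symbolic and invoking AM--GM on the eigenvalues; note that to close it you should still say explicitly (as the paper does in its first displayed equation) that Assumption~\ref{assumption:zero_mean_gaussian_supple} makes the product of marginals equal to $\mathcal{N}(0,\tfrac{1}{d}I_d)$, so that zero KL to that reference really is $TC(\bm{h})=0$. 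Your primary route, by contrast, skips the KL machinery entirely: equality in Lemma~\ref{lem:entropy} forces all eigenvalues to $1/d$, symmetry of $\mathcal{C}_{\text{auto}}$ then forces $\mathcal{C}_{\text{auto}} = \tfrac{1}{d}I_d$, and a zero-mean Gaussian with (scalar) diagonal covariance has independent components. This is more elementary (it never needs Lemma~\ref{lem:kld_gaussian}) and isolates the cleanest intermediate fact, namely that maximal VNE pins the autocorrelation to a multiple of the identity; what the paper's route buys instead is an explicit quantitative handle, the total correlation, which measures deviation from disentanglement in exactly the form used by the definition it cites. Both arguments share the same (implicit) identification of the mini-batch $\mathcal{C}_{\text{auto}}$ with the population autocorrelation $\mathbb{E}[\bm{h}\bm{h}^T]$, so you are not assuming anything beyond what the paper does.
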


\begin{proof}
By Assumption~\ref{assumption:zero_mean_gaussian_supple},
$\bm{h} \sim \mathcal{N} (0, \bm{\Sigma}_1)$ for $\bm{\Sigma}_1\in \mathbb{R}^{d \times d}$
where diagonal entries in $\bm{\Sigma}_1$ are equal to $1/d$.

In addition, we define new random variable $\bm{h}' \sim \mathcal{N} (0, \bm{\Sigma}_2)$ for $\bm{\Sigma}_2=\frac{1}{d}\cdot I_d$.

Then, because $\bm{h}^{(i)}\sim\mathcal{N} (0, \frac{1}{d})$ and $\bm{h}'^{(i)}\sim\mathcal{N} (0, \frac{1}{d})$
and the components of $\bm{h}'$ are independent,
\vspace{-0.1cm}
\begin{equation}
\label{eq:disentanglement_prop_1}
\prod_{i=1}^d p(\bm{h}^{(i)}) = \prod_{i=1}^d p(\bm{h}'^{(i)}) = p(\bm{h}').
\vspace{-0.1cm}
\end{equation}
By Lemma~\ref{lem:entropy}, $S(\mathcal{C}_{\text{auto}})$ is maximized if and only if
\vspace{-0.1cm}
\begin{equation}
\label{eq:disentanglement_prop_2}
\forall_{j=1}^d\lambda_{j}=\frac{1}{d},
\vspace{-0.1cm}
\end{equation}
where $\lambda_{j}$ are eigenvalues of $\bm{\Sigma}_1(=\mathbb{E}[\bm{h}\bm{h}^T]=\mathcal{C}_{\text{auto}})$.

Starting from Definition of total correlation $TC(\bm{h})$ in~\cite{achille2018emergence}, we have
\vspace{-0.2cm}
\begin{align}
2\cdot TC(\bm{h})&=2\cdot D_{\mathrm{KL}}( p(\bm{h}) \Vert \prod_{i=1}^d p(\bm{h}^{(i)}) )\\[-2pt]
\label{eq:disentanglement_tc_1}
&=2\cdot D_{\mathrm{KL}}( p(\bm{h}) \Vert p(\bm{h}') )\\[-2pt]
\label{eq:disentanglement_tc_2}
&=\text{tr}(\bm{\Sigma}_2^{-1}\bm{\Sigma}_1) -d +\log{\frac{|\bm{\Sigma}_2|}{|\bm{\Sigma}_1|}}\\[-2pt]
\label{eq:disentanglement_tc_3}
&=d -d +\log{\frac{(1/d)^d}{(1/d)^d}}=0,
\end{align}
where Eq.~\eqref{eq:disentanglement_tc_1} follows from Eq.~\eqref{eq:disentanglement_prop_1}, Eq.~\eqref{eq:disentanglement_tc_2} follows from Lemma~\ref{lem:kld_gaussian}, and Eq.~\eqref{eq:disentanglement_tc_3} follows from Eq.~\eqref{eq:disentanglement_prop_2}.

If $TC(\bm{h})=0$, the components of $\bm{h}$ are independent, therefore $\bm{h}$ is disentangled~\cite{achille2018emergence}.
\end{proof}

\begin{theorem}[Isotropy and VNE]
For a given representation matrix $\bm{H}\in \mathbb{R}^{N \times d}$,
suppose that $N \le d$ and $S(\mathcal{C}_{\text{auto}})$ is maximized. Then,
\vspace{-0.1cm}
\begin{equation}
\bm{H}\bm{H}^T = \bm{I}_N.
\vspace{-0.1cm}
\end{equation}
\end{theorem}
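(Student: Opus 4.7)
The plan is to exploit the standard fact that $\bm{H}^T\bm{H}$ and $\bm{H}\bm{H}^T$ share the same nonzero spectrum, then combine it with the entropy-maximization characterization already used (Lemma~\ref{lem:entropy}) to force $\bm{H}\bm{H}^T$ to have $N$ eigenvalues all equal to $1$.

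First, I would pin down the feasible spectrum of $\mathcal{C}_{\text{auto}}$. Because the rows $\bm{h}_i$ are unit-norm, $\text{tr}(\mathcal{C}_{\text{auto}})=\sum_i \|\bm{h}_i\|_2^2/N=1$, so the eigenvalues $\{\lambda_j\}_{j=1}^d$ satisfy $\lambda_j\ge 0$ and $\sum_j \lambda_j=1$. Since $\bm{H}\in\mathbb{R}^{N\times d}$ with $N\le d$, $\text{rank}(\mathcal{C}_{\text{auto}})\le N$, so at most $N$ of the $\lambda_j$'s are nonzero. Thus $S(\mathcal{C}_{\text{auto}})=-\sum_{j=1}^d \lambda_j\log\lambda_j$ is effectively an entropy of a distribution supported on at most $N$ atoms, which by Lemma~\ref{lem:entropy} is bounded above by $\log N$.

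Next, I would apply the maximization hypothesis: $S(\mathcal{C}_{\text{auto}})$ attains this feasible maximum $\log N$, and by the equality case of Lemma~\ref{lem:entropy} this forces exactly $N$ of the eigenvalues to equal $1/N$ and the remaining $d-N$ to equal $0$. Translating to $\bm{H}^T\bm{H}=N\cdot\mathcal{C}_{\text{auto}}$, its nonzero eigenvalues are all equal to $1$, with multiplicity $N$. Using the SVD of $\bm{H}$, or equivalently the identity that the nonzero eigenvalues of $\bm{H}^T\bm{H}$ and $\bm{H}\bm{H}^T$ coincide with identical multiplicities, the matrix $\bm{H}\bm{H}^T\in\mathbb{R}^{N\times N}$ has all $N$ of its eigenvalues equal to $1$. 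A symmetric matrix whose spectrum consists solely of $1$'s must equal $\bm{I}_N$ by its spectral decomposition, yielding $\bm{H}\bm{H}^T=\bm{I}_N$.

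The main subtlety, and the only step requiring care, is interpreting the phrase \emph{``$S(\mathcal{C}_{\text{auto}})$ is maximized.''} Over all PSD density-like matrices in $\mathbb{R}^{d\times d}$ with trace $1$, the unconstrained supremum of $S$ is $\log d$, which is unattainable whenever $N<d$ because of the rank ceiling $\text{rank}(\mathcal{C}_{\text{auto}})\le N$. I would therefore read the hypothesis as maximization over the feasible set $\{\bm{H}^T\bm{H}/N : \bm{H}\in\mathbb{R}^{N\times d},\ \|\bm{h}_i\|_2=1\}$, for which the true maximum is $\log N$; this is exactly what makes the equality case of Lemma~\ref{lem:entropy} bite. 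Once that reading is fixed, the rest of the argument is a short spectral-identity computation and requires no further machinery.
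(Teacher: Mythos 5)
Your proof is correct and follows essentially the same route as the paper's: invoke the equality case of Lemma~\ref{lem:entropy} under the rank-$N$ constraint to force $N$ eigenvalues equal to $1/N$ and the rest zero, then pass via the SVD (equivalently, the shared nonzero spectrum of $\bm{H}^T\bm{H}$ and $\bm{H}\bm{H}^T$) to conclude $\bm{H}\bm{H}^T=\bm{I}_N$. Your explicit reading of ``maximized'' as maximization over the rank-constrained feasible set (maximum $\log N$, not $\log d$) is exactly what the paper's proof assumes implicitly when it asserts the eigenvalues are $1/N$ for the first $N$ and zero otherwise.
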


\begin{proof}
We consider singular value decomposition of $\bm{H} (= \bm{U}\bm{\Sigma}\bm{V}^T)$ for $\bm{U}\in \mathbb{R}^{N \times N}$, $\bm{\Sigma}\in \mathbb{R}^{N \times d}$, and $\bm{V}\in \mathbb{R}^{d \times d}$.
If $N \le d$ and $S(\mathcal{C}_{\text{auto}})$ is maximized, by Lemma~\ref{lem:entropy}, eigenvalues of $\mathcal{C}_{\text{auto}}(=\bm{H}^T\bm{H}/N=\bm{V}\bm{\Sigma}^T\bm{\Sigma}\bm{V}^T/N)$ are supposed to be equal to $1/N$ for the first $N$ eigenvalues and zero for the others. Therefore $\bm{\Sigma}\bm{\Sigma}^T=\bm{I}_N$ and we have
\vspace{-0.1cm}
\begin{equation}
\bm{H}\bm{H}^T=\bm{U}\bm{\Sigma}\bm{\Sigma}^T\bm{U}^T = \bm{I}_N.
\vspace{-0.1cm}
\end{equation}
\end{proof}

\section{Main Algorithm}
\label{sec:main_algo}

\begin{figure}[h!]
\centering
\begin{minipage}{\columnwidth}
\begin{minted}[frame=single,fontsize=\scriptsize]{python}
# N   : batch size
# d   : embedding dimension
# H   : embeddings, Tensor, shape=[N, d]

def get_vne(H):
    Z = torch.nn.functional.normalize(H, dim=1)
    rho = torch.matmul(Z.T, Z) / Z.shape[0]
    eig_val = torch.linalg.eigh(rho)[0][-Z.shape[0]:]
    return - (eig_val * torch.log(eig_val)).nansum()

# the following is equivalent and faster when N < d
def get_vne(H):
    Z = torch.nn.functional.normalize(H, dim=1)
    sing_val = torch.svd(Z / np.sqrt(Z.shape[0]))[1]
    eig_val = sing_val ** 2
    return - (eig_val * torch.log(eig_val)).nansum()
\end{minted}
\caption{\label{figure:vne_code}PyTorch implementation of VNE.}
\end{minipage}
\end{figure}

\section{Computational Overhead}

We train I-VNE$^+$ using 2$\times$RTX 3090 GPUs, ImageNet-1K, and various batch sizes and models. In Table~\ref{tab:computational_overhead}, the average computational overhead is 2.68\%.

\label{sec:computational_overhead}
\begin{table}[h!]
\centering
\resizebox{\columnwidth}{!}{
\begin{tabular}{@{}llcccccc@{}}
\toprule
Model                 &        & \multicolumn{3}{c}{ResNet-18} & \multicolumn{3}{c}{ResNet-50} \\ \cmidrule(){1-2} \cmidrule(l){3-5} \cmidrule(l){6-8}
Batch Size            &        & 256      & 128      & 64      & 256      & 128      & 64      \\ \midrule
Average training time & On VNE & 0.051    & 0.024    & 0.011   & 0.120    & 0.073    & 0.031   \\
per iteration (sec.)  & Total  & 2.318    & 1.288    & 0.845   & 2.745    & 2.101    & 1.127   \\ \midrule
Overhead              &        & 2.21\%   & 1.89\%   & 1.36\%  & 4.37\%   & 3.48\%   & 2.75\%  \\ \bottomrule
\end{tabular}
}
\caption{\label{tab:computational_overhead}Computational overhead of VNE.}
\end{table}

\section{Experimental Details for I-VNE$^+$}
\label{sec:implementation_delail}
The PyTorch implementation codes will be made available online.
Our implementations follow the standard training protocols of SSL in~\cite{grill2020bootstrap,zbontar2021barlow} and the standard evaluation protocols of SSL in~\cite{misra2020self,goyal2019scaling,grill2020bootstrap,zbontar2021barlow}.
A few important hyperparameters are described as follows.

\textbf{Backbone and Projector:}
For all datasets, we use ResNet-50~\cite{he2016deep} as the default backbone.
For CIFAR-10, we use 2-layer MLP projector with hidden dimension of 2048 and output dimension of 128.
For ImageNet-100, we use 3-layer MLP projector with hidden dimension of 2048 and output dimension of 256.
For ImageNet-1K, we use the same projector as in the  ImageNet-100 case, except that the output dimension is 512.

\textbf{Optimization:}
We use SGD optimizer with momentum of 0.9. The learning rate~(LR) is linearly scaled with batch size (LR = base learning rate $\times$ batch size / 256), and it is scheduled by the cosine learning rate decay with 10-epoch warm-up~\cite{loshchilov2016sgdr}.
For CIFAR-10 and ImageNet-100, we use base learning rate of 0.4, batch size of 64, and weight decay of 1e-4.
For ImageNet-1K, we use base learning rate of 0.2, batch size of 512, and weight decay of 1e-5.

\textbf{Augmentation:}
For CIFAR-10 and ImageNet-100, we adopt multi-view setting in~\cite{caron2020unsupervised} and generate 6 views using the same augmentations in~\cite{chen2020simple} (for CIFAR-10) and in~\cite{caron2020unsupervised} (for ImageNet-100).
For ImageNet-1K, we generate the default 2 views using the same augmentation as in~\cite{grill2020bootstrap}.
Note that we use 2-view setting for ImageNet-1K because of the computational limitation.

\section{Supplementary Results}
\label{sec:supplementary_results}
\begin{table}[h!]
\centering
\resizebox{0.5\columnwidth}{!}{
\begin{tabular}{@{}lcc@{}}
\toprule
Method                                                 & Top-1         & Top-5         \\ \midrule
Supervised \cite{chen2020simple}      & 76.5          & 93.7          \\ \midrule
SimCLR \cite{chen2020simple}          & 69.3          & 89.0          \\
MoCo v2 \cite{chen2020improved}       & 71.1          & 90.1          \\
InfoMin Aug. \cite{tian2020makes}     & 73.0          & 91.1          \\
BYOL \cite{grill2020bootstrap}        & 74.3          & \textbf{91.6} \\
SwAV \cite{caron2020unsupervised}     & \textbf{75.3} &               \\ 
Shuffled-DBN \cite{hua2021feature}    & 65.2          &               \\
Barlow Twins \cite{zbontar2021barlow} & 73.2          & 91.0          \\
VICReg \cite{bardes2021vicreg}        & 73.2          & 91.1          \\ \midrule
I-VNE$^+$ (ours)                                             & 72.1          & 91.0          \\ \bottomrule
\end{tabular}
}
\caption{\label{tab:ssl_linear_appendix}SSL: Linear evaluation performance in ImageNet-1K for various representation learning methods. They are all based on ResNet-50 encoders pre-trained with various datasets. Linear classifier on top of the frozen pre-trained model is trained with labels. State-of-the-art methods are included and the best results are indicated in bold.
}
\end{table}

\phantom{0}\\
\begin{figure}[ht!]
\centering
\begin{minipage}{0.48\columnwidth}
\includegraphics[width=\columnwidth]{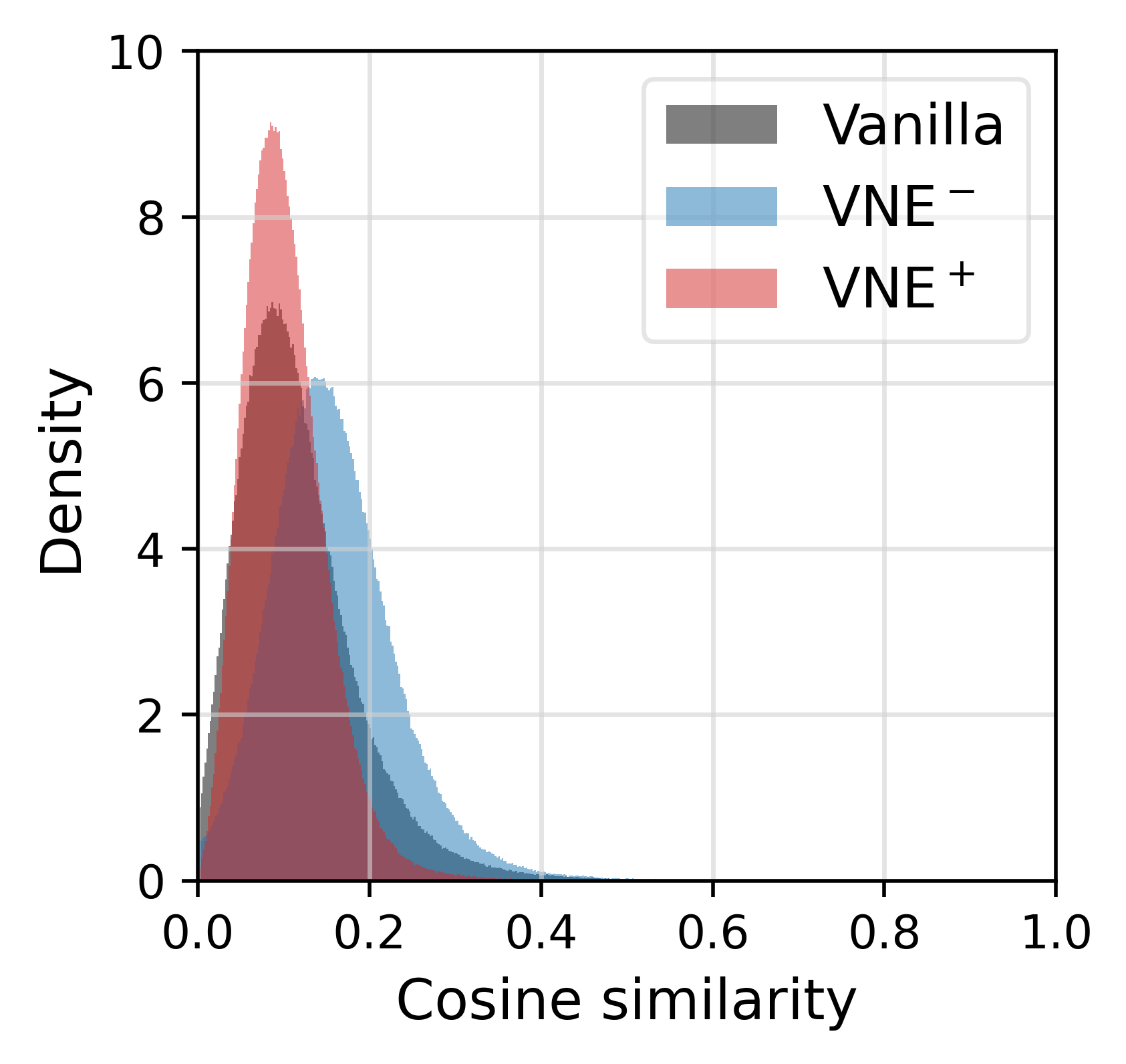}
\caption{\label{figure:supple_dis}Meta-learning: Disentanglement of representation.
}
\end{minipage}
\hfill
\begin{minipage}{0.48\columnwidth}
\includegraphics[width=\columnwidth]{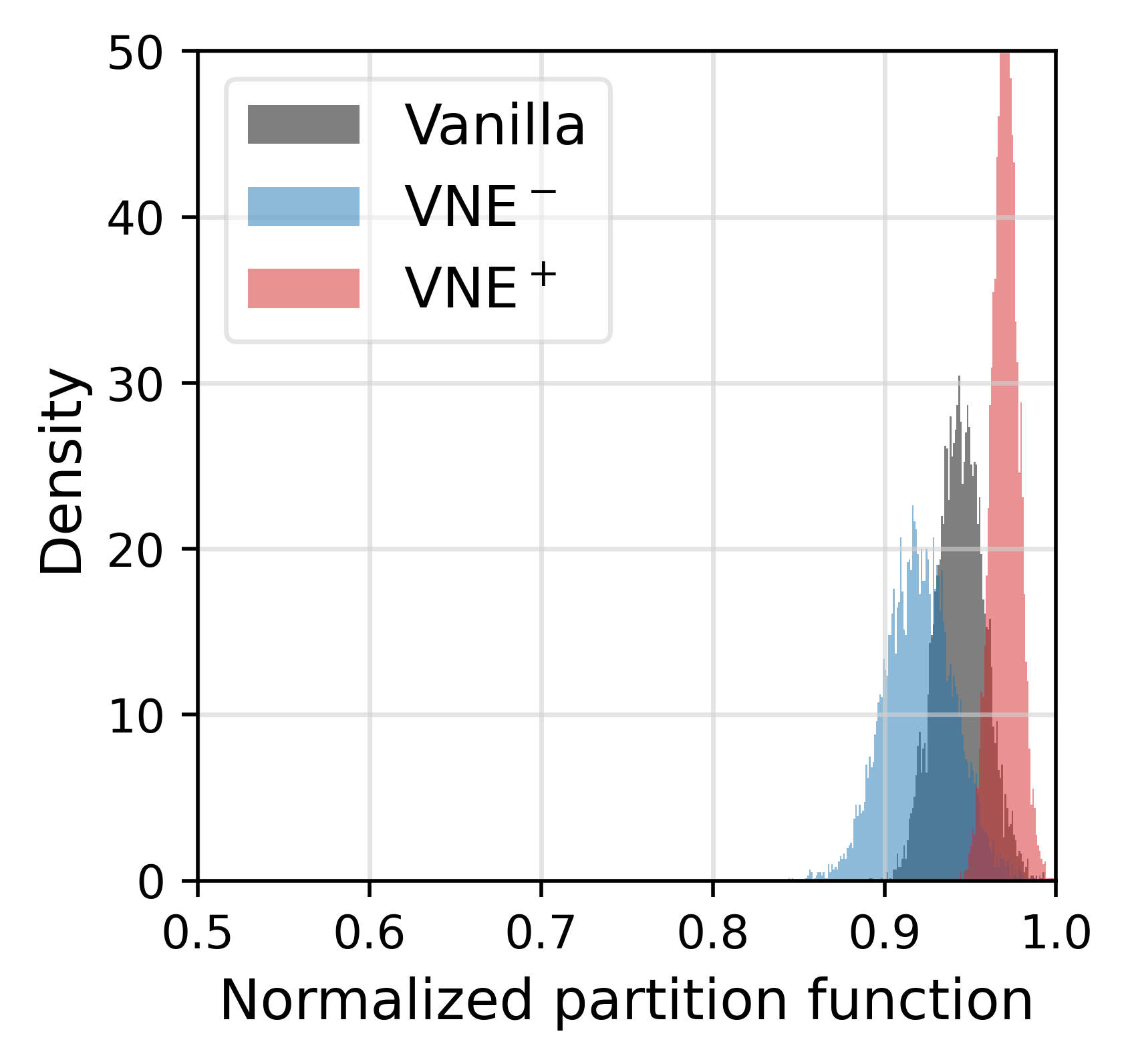}
\caption{\label{figure:supple_iso}Domain generalization: Isotropy of representation.
}
\end{minipage}
\vspace{-1.0cm}
\end{figure}

\phantom{0}\\
\begin{table}[h!]
\centering
\resizebox{\columnwidth}{!}{
\begin{tabular}{@{}llcccccccc@{}}
\toprule
Algorithm & Method  & \multicolumn{2}{c}{PACS} & \multicolumn{2}{c}{VLSC} & \multicolumn{2}{c}{OfficeHome} & \multicolumn{2}{c}{TerraIncognita} \\ \cmidrule(l){3-4} \cmidrule(l){5-6} \cmidrule(l){7-8} \cmidrule(l){9-10} 
          &         & Avg.    & Diff.          & Avg.    & Diff.          & Avg.       & Diff.             & Avg.         & Diff.               \\ \midrule
ERM       & Vanilla & 85.2    &                & 76.7    &                & 64.9       &                   & 45.4         &                     \\ \cmidrule(l){2-10} 
          & VNE$^-$ & 86.9    & \textbf{1.7}   & 78.1    & \textbf{1.4}   & 65.9       & \textbf{1.0}      & 50.6         & \textbf{5.2}        \\
          & SE$^-$ & 85.0    & -0.2           & 76.5    & -0.2           & 65.3       & 0.4               & 50.4         & 5.0                 \\ \midrule
SWAD      & Vanilla & 88.2    &                & 79.4    &                & 70.2       &                   & 50.9         &                     \\ \cmidrule(l){2-10} 
          & VNE$^-$ & 88.3    & 0.1            & 79.7    & \textbf{0.3}   & 71.1       & \textbf{0.9}      & 51.7         & \textbf{0.8}        \\
          & SE$^-$ & 88.4    & \textbf{0.2}   & 79.6    & 0.1            & 71.0       & 0.8               & 51.2         & 0.2                 \\ \bottomrule
\end{tabular}
}
\caption{\label{tab:shannon_entropy_dg} Von Neumann entropy vs. Shannon entropy: The results of domain generalization with ERM and SWAD algorithms are shown. For regularizing Shannon entropy, we have used the InfoNCE estimation of self-information, $I_{\text{NCE}}(\bm{h};\bm{h})$. 
}
\end{table}

\end{document}